\def\eqref#1{equation~\ref{#1}}
\def\1{\bm{1}}
\def\eps{{\epsilon}}
\DeclareMathAlphabet{\mathsfit}{\encodingdefault}{\sfdefault}{m}{sl}
\SetMathAlphabet{\mathsfit}{bold}{\encodingdefault}{\sfdefault}{bx}{n}
\newcommand{\R}{\mathbb{R}}
\newcommand{\Rpos}{\mathbb{R}_{>0}}
\renewcommand{\O}{\mathrm{O}}
\newcommand{\tildeO}{\widetilde{\mathrm{O}}}
\DeclarePairedDelimiter\abs{\lvert}{\rvert}%
\DeclarePairedDelimiter\norm{\lVert}{\rVert}%
\let\oldabs\abs
\def\abs{\@ifstar{\oldabs}{\oldabs*}}
\let\oldnorm\norm
\def\norm{\@ifstar{\oldnorm}{\oldnorm*}}
\renewcommand\P{\mathsf P}
\renewcommand\L{\mathsf L}
\newcommand\NL{\mathsf{NL}}
\newcommand{\TC}{\mathsf{TC}}
\newcommand{\NC}{\mathsf{NC}}
\newcommand\FOM{\mathsf{FO(M)}}
\newcommand{\CoT}{\mathsf{CoT}}
\newcommand\TIME{\mathsf{TIME}}
\newcommand\tildeTIME{\widetilde{\mathsf{TIME}}}
\newcommand{\SPACE}{\mathsf{SPACE}}
\newcommand{\CSL}{\mathsf{CSL}}
\newcommand{\lnorm}{\mathsf{layer\_norm}}
\newcommand{\mlnorm}{\mathsf{multi\_layer\_norm}}
\newcommand{\slnorm}{\mathsf{proj\_layer\_norm}}
\newcommand{\hash}{\phi}
\newtheorem{theorem}{Theorem}
\newtheorem{lemma}{Lemma}
\newtheorem{corollary}{Corollary}[theorem]
\theoremstyle{definition}
\newtheorem{definition}{Definition}
\title{The Expressive Power of Transformers with Chain of Thought}
\author{William Merrill \\
New York University \\
\texttt{willm@nyu.edu}
\And
Ashish Sabharwal \\
Allen Institute for AI \\
\texttt{ashishs@allenai.org}
}
\definecolor{olmocolor}{rgb}{0.25, 0.77, 0.875} 
\newcommand\change[1]{{\color{blue} #1}}
\newcommand\changeB[1]{{\color{purple} #1}}
\renewcommand\change[1]{#1}
\newcommand\changeC[1]{{\color{teal} #1}}
\renewcommand\changeB[1]{#1}
\renewcommand\changeC[1]{#1}
\begin{document}

\maketitle

\begin{abstract}
Recent theoretical work has identified surprisingly simple reasoning problems, such as checking if two nodes in a graph are connected or simulating finite-state machines, that are provably unsolvable by standard transformers that answer immediately after reading their input. However, in practice, transformers' reasoning can be improved by allowing them to use a ``chain of thought'' or ``scratchpad'', i.e., generate and condition on a sequence of intermediate tokens before answering. Motivated by this, we ask: \emph{Does such intermediate generation fundamentally extend the computational power of a decoder-only transformer?} We show that the answer is \emph{yes}, but the amount of increase depends crucially on the amount of intermediate generation. For instance, we find that transformer decoders with a logarithmic number of decoding steps (w.r.t.\ the input length) push the limits of standard transformers only slightly, while a linear number of decoding steps\changeC{, assuming projected pre-norm (a slight generalization of standard pre-norm),} adds a clear new ability (under standard complexity conjectures): recognizing all regular languages. Our results also imply that linear steps keep transformer decoders within context-sensitive languages, and polynomial steps \change{with generalized pre-norm} make them recognize exactly the class of polynomial-time solvable problems---the first exact characterization of a type of transformers in terms of standard complexity classes. Together, this provides a nuanced framework for understanding how the length of a transformer’s chain of thought or scratchpad impacts its reasoning power.

\end{abstract}

\section{Introduction}

A series of recent theoretical results \citep{merrill2023parallelism,merrill2023logic,merrill2022SatAttnTC0,liu2023transformers,chiang2023tighter,hao2022ac0} has unveiled surprising limits on realistic formal models of transformers. They have shown that standard transformers, even with ideal parameters, cannot perfectly solve many sequential reasoning problems at scale, such as simulating finite-state machines, deciding whether nodes in a graph are connected, or solving matrix equalities. The intuition here is that the
transformer
lacks recurrent connections, and recurrence is required to solve these sequential reasoning problems. Empirically, reasoning problems inspired by these results cannot be solved by cutting-edge transformer language models such as ChatGPT and GPT-4 \citep{zhang2023language}, and the reasoning performance of GPT-4 negatively correlates with the depth of the problem's computation graph \citep{dziri2023faith}.
These results show certain kinds of sequential reasoning pose a challenge for the transformer and motivate extensions to address this issue.

One method that has been empirically
successful for improving sequential reasoning with transformers is adding a so-called \emph{chain of thought} \citep{wei2022chain} or \emph{scratchpad} \citep{nye2022show}. These methods allow the transformer to output a sequence of \emph{intermediate tokens} before answering, rather than answering right away after reading the input. Intuitively, such methods could unlock greater expressive power on sequential reasoning problems because the model can use each intermediate token as a kind of recurrent state.
\citet{feng2023revealing} recently showed how chain of thought lets transformers solve a specific modular arithmetic problem that they likely cannot solve without one. Yet there is no general characterization of the class of problems transformers can solve with chain of thought. Thus, the extent to which chain of thought alleviates transformers' weaknesses is unclear, as well as the number of chain of thought steps required to gain reasoning power.


In this work, we address these open questions by characterizing the reasoning power of transformer decoders that can take \emph{intermediate steps} before generating an answer and comparing them against transformers without intermediate steps. A transformer with a chain of thought constitutes a special case of a transformer decoder with intermediate steps.
Our fine-grained results give upper and lower bounds on transformers' power depending on $t(n)$: the number of allowed intermediate steps as a function of the input size $n$. We focus mainly on understanding three regimes: logarithmic steps (when $t(n) = \Theta(\log n)$), linear steps (when $t(n) = \Theta(n)$), and polynomial steps:

\begin{compactenum}
    \item \textbf{Prior Work: No Intermediate Steps.}
    Recent work has shown transformer decoders without any intermediate steps can only solve problems that lie inside the fairly small circuit complexity class $\TC^0$ \citep{merrill2023parallelism} and related logical classes \citep{merrill2023logic,chiang2023tighter}. This implies basic transformers are far from Turing-complete: they cannot even solve problems complete for classes larger than $\TC^0$ such as simulating automata ($\NC^1$-complete), deciding directed graph connectivity ($\NL$-complete), or solving linear equalities ($\P$-complete).\footnote{Assuming $\NC^1$, $\NL$, and $\P$ do not collapse to $\TC^0$, respectively.}
    \item \textbf{Logarithmic Steps.} With a \emph{logarithmic} number of intermediate steps, we show that the upper bound for transformers expands slightly from $\TC^0$ to $\L$. This means transformers with a logarithmic number of intermediate steps might gain power, but they still cannot solve $\NL$-complete problems like directed graph connectivity or $\P$-complete problems like solving linear equalities.\footnote{\label{footnote:no-collapse-to-L} Assuming $\NL$ and $\P$ do not collapse to $\L$, respectively.}
    \item \noindent \textbf{Linear Steps.}
    \emph{Linear} intermediate steps allow transformers \changeC{with projected pre-norm}\footnote{\changeC{i.e., standard pre-norm but applied only to a \emph{linear projection} of a sublayer's input; cf.~\Cref{def:projected-pre-norm}}} to simulate automata ($\NC^1$-complete),
    which cannot be done without intermediate steps unless $\TC^0 = \NC^1$.
    \noindent \textbf{Polynomial Steps.}
    With a \emph{polynomial} number of decoding steps, we show that transformers \change{with strict causal attention and \changeC{projected} pre-norm} are equivalent to the class $\P$. This, to our best knowledge, is the first equivalence between a class of transformers and a standard complexity class.
\end{compactenum}
Together, our results provide a framework for understanding how the length of a transformer's chain of thought affects its reasoning power. We find a logarithmic chain does not add much, while a linear chain affords more power on inherently sequential reasoning problems.

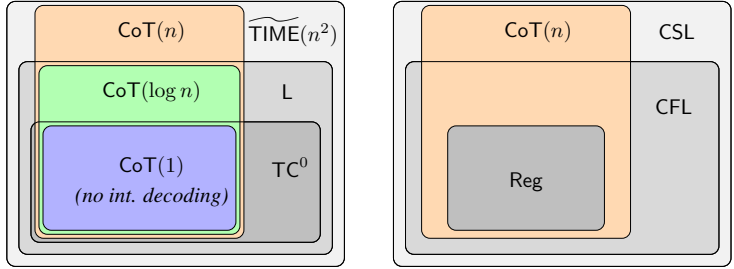
\begin{figure}
    \centering

    \resizebox{0.7\textwidth}{!}{
    
    \begin{tikzpicture}
    \draw[rounded corners, fill=gray!10] (-2.4,-1.4) rectangle (3.2cm, 3cm);
    \draw[rounded corners, fill=gray!30] (-2.2,-1.2) rectangle (3.0cm, 2cm);
    \draw[rounded corners, fill=gray!50] (-2,-1) rectangle (2.8cm, 1cm);

    \draw[rounded corners, fill=orange!30] (-1.933,-0.933) rectangle (1.533cm, 2.933cm);
    \node at (0,2.5) {$\CoT(n)$};
    \draw[rounded corners, fill=green!30] (-1.867,-0.867) rectangle (1.467cm, 1.933cm);
    \node at (0,1.5) {$\CoT(\log n)$};
    \draw[rounded corners, fill=blue!30] (-1.8,-0.8) rectangle (1.4cm, 0.933cm);
    \node at (0,0.25) {$\CoT(1)$};
    \node at (0,-0.25) {\textit{(no int. decoding)}};

    \draw[rounded corners] (-2.4,-1.4) rectangle (3.2cm, 3cm);
    \draw[rounded corners] (-2.2,-1.2) rectangle (3.0cm, 2cm);
    \draw[rounded corners] (-2,-1) rectangle (2.8cm, 1cm);

    \node at (2.35,2.5) {$\tildeTIME(n^2)$};
    \node at (2.25, 1.5) {$\L$};
    \node at (2.3,0.25) {$\TC^0$};
    \end{tikzpicture}
    \quad\quad
    \begin{tikzpicture}
    \draw[rounded corners, fill=gray!10] (-2.4,-1.4) rectangle (3.2cm, 3cm);
    \draw[rounded corners, fill=gray!30] (-2.2,-1.2) rectangle (3.0cm, 2cm);

    \draw[rounded corners, fill=orange!30] (-1.933,-0.933) rectangle (1.533cm, 2.933cm);
    \node at (0,2.5) {$\CoT(n)$};

    \draw[rounded corners] (-2.4,-1.4) rectangle (3.2cm, 3cm);
    \draw[rounded corners] (-2.2,-1.2) rectangle (3.0cm, 2cm);
    \draw[rounded corners, fill=gray!50] (-1.5,-0.8) rectangle (1.1cm, 0.933cm);

    \node at (2.3,2.5) {$\CSL$};
    \node at (2.25, 1.25) {$\mathsf{CFL}$};
    \node at (-0.2,0) {$\mathsf{Reg}$};
    \end{tikzpicture}

    } 
    
    \caption{
    Summary of results: transformers with intermediate generation against various classes of formal languages. A logarithmic number of chain-of-thought steps remains in log-space ($\mathsf L$). A linear number of steps adds more power, enabling recognizing all regular languages ($\mathsf{Reg}$), but is contained within context-sensitive languages ($\CSL$).
    We assume context-free languages ($\mathsf{CFL}$) require $\tilde \omega(n^2)$ time to recognize.
    Some regions with area in the plot are not known to be non-empty.
    }
    \label{fig:hierarchy}
\end{figure}

\subsection{Main Results: Power of Transformers with Intermediate Decoding}


Let $\TIME(t(n))$ be the class of languages $L$ for which there exists a Turing machine that runs in time $\O(t(n))$ and accepts $L$.\footnote{As we will define later, this is a non-random-access multitape Turing machine.} Let $\tildeTIME(t(n))$ be the class of problems in $\TIME(t(n) \log^k n)$ for some $k$, which is meaningful for $t(n) \geq n$.
Let $\SPACE(s(n))$ be the class of languages $L$ for which there exists a Turing machine with tape size bounded by $\O(s(n))$ that accepts $L$.
We show the following relationship between transformers with $t(n)$ steps and standard time/space complexity classes:
\begin{equation} \label{eq:general}
    \TIME(t(n)) \, \subseteq \, \CoT(t(n))
    \begin{array}{l}
        \subseteq \SPACE(t(n) + \log n) \\
        \subseteq \tildeTIME(t(n)^2 + n^2)
    \end{array} .
\end{equation}
\change{Here $\CoT(t(n))$ denotes the set of languages recognized by some transformer using $t(n)$ decoding steps. Our lower bound (left side of \Cref{eq:general}) assumes strict causal saturated attention and \changeC{projected} pre-norm, while upper bounds hold both with and without these architectural assumptions.}
Both our time lower bound and space upper bound are fairly tight: improving either by a factor larger than $\log t(n)$ would result in a fundamental complexity theory advance \citep{Hopcroft1977OnTV}.


\paragraph{Capabilities of Transformers with CoT.} The left side of \Cref{eq:general} implies that transformer decoders with $\Theta(n)$ steps can simulate real-time models of computation like automata or counter machines \citep{merrill2021linguistic}. Under standard assumptions in complexity theory, transformers with no decoding steps \emph{cannot} simulate all automata \citep{merrill2023parallelism,merrill23icgi,liu2023transformers}. Thus, a linear number of decoding steps makes transformers strictly more powerful. Similarly, the left side of \Cref{eq:general} implies transformers with a \emph{quadratic} number of steps can express a linear-time algorithm (for a random access Turing machine) to solve directed graph connectivity \citep{wigderson1992complexity}, again a problem known to be beyond the limits of standard transformers.
In the same vein, with a \emph{polynomial} number of decoding steps, transformers can solve linear equalities, Horn-clause satisfiability, and universal context-free recognition, all of which are $\P$-complete and thus known to be inexpressible by standard transformers \citep{merrill2023parallelism}.

The left side of \Cref{eq:general} is proven by showing transformer decoders can simulate $t$ Turing machine steps with $t$ intermediate steps. Similar prior results have assumed a transformer with external memory \citep{schuurmans2023memory} or an encoder-decoder model with nonstandard-positional decodings \citep{perez2021attention}. Our construction adapts these ideas to work for a decoder-only model \emph{without} external memory or extra positional encodings\change{, but with strict causal masking and \changeC{projected} pre-norm (cf.~\Cref{sec:transformers})}.\footnote{Our construction (\Cref{thm:lower-bound}) can be easily modified to work with an encoder-decoder model as well.}
The key idea behind our more general construction is the \textbf{layer-norm hash} (\Cref{sec:lnorm-hash}): a simple module for effectively storing memory in decoder-only transformers.
We believe the layer-norm hash could be broadly useful for building algorithms in transformers.
For example, \citet{yao2021self} used a related idea to construct transformers that recognize bounded-depth Dyck languages, although in a more ad hoc way.

\paragraph{Limitations of Transformers with CoT.}
The right side of \Cref{eq:general} establishes two upper bounds on transformer decoders with $t(n)$ intermediate steps that depend on both $t(n)$ and $n$. We turn to the implications of this general result in different regimes for $t(n)$:

\begin{compactenum}
    \item \textbf{Log Steps}: Transformer decoders with $\O(\log n)$ intermediate steps can only recognize languages in $\L = \SPACE(\log n)$.
    This implies that transformers with $\O(\log n)$ intermediate steps cannot solve $\NL$- or $\P$-complete problems$^{\ref{footnote:no-collapse-to-L}}$ like directed graph connectivity, just like transformers with no intermediate decoding \citep{merrill2023parallelism}.
    
    \item \textbf{Linear Steps}: Transformer decoders with $\O(n)$ intermediate steps can only recognize languages that are in both $\tildeTIME(n^2)$ and $\SPACE(n)$. Since $\SPACE(n)$ falls within the context-sensitive languages \citep{kuroda1964classes}, transformers with linear steps can recognize at most context-sensitive languages. Alongside our lower bound, this shows transformer decoders with $\Theta(n)$ steps fall somewhere between regular and context-sensitive languages in the Chomsky hierarchy. Further, transformers with $\O(n)$ steps cannot recognize all context-free languages unless context-free languages can be parsed in soft quadratic time.\footnote{The best known algorithms for context-free recognition run in time $\O(n^\omega)$, where $\omega$ is the matrix multiplication constant \citep{valiant1975general}; the best lower bounds for context-free parsing are sub-quadratic \citep{lee2002fast}.}
    
    \item \textbf{Polynomial Steps}: If $t(n) = \O(n^c)$ for some $c$, we get an upper bound of $\P = \bigcup_{c=1}^\infty \TIME(n^c)$. Combined with our lower bound, this shows that transformer decoders with a polynomial number of steps recognize \emph{exactly} the class $\P$. Thus, a polynomial number of steps turns transformers into strong reasoners, though running a polynomial number of forward passes with a large transformer is likely intractable in practice.
\end{compactenum}

Together, these results show that intermediate generation like chain of thought or scratchpad can add reasoning power to transformers and that the number of steps matters as a computational resource akin to time or space. Some of the limitations identified in prior work \cite[][etc.]{merrill2023parallelism,chiang2023tighter} can be overcome with a linear or quadratic number of steps, and a polynomial number of steps covers all problems in $\P$. On the other hand, we have not identified any concrete reasoning problem where a logarithmic number of steps would help. These results provide a unified understanding of the power of transformer decoders across decoding lengths and problems.

\section{Preliminaries}

We study the power of decoder-only transformers that can generate intermediate tokens between reading the input and generating an answer. On input $x \in \Sigma^n$, the transformer consumes tokens $x_1, \ldots, x_n$ for the first $n$ steps, and then, for $t(n)$ \emph{intermediate steps}, consumes the token generated by the previous step. At each step, the transformer can attend over all previous hidden states.
This standard method of generating text from a decoder-only model can be described formally as follows.
Let $\Sigma$ be a finite alphabet and $f : \Sigma^* \to \Sigma$ be a function mapping a prefix to a next token (parameterized by a transformer).
Let $\cdot$ be concatenation.
We define the $k$-step extension of $f$ as
\begin{align*}
    f^0(x) = x , \quad \quad \quad
    f^{k+1}(x) = f^k(x) \cdot f(f^k(x)) .
\end{align*}
We say we have run $f$ on $x$ with $t(n)$ (additional) decoding steps if we compute the function $f^{t(\abs{x})}(x)$. We consider $f$ with $t(n)$ steps to recognize the language of strings such that $f^{t(\abs{x})}(x) = 1$, where $1 \in \Sigma$ is a special ``accept'' symbol. We denote by $\CoT(t(n))$ the set of languages that are recognized by $t(n)$ decoding steps for some transformer $f$.

\subsection{Transformers} \label{sec:transformers}

A transformer is a neural network parameterizing a function $\Sigma^* \to \Sigma$.
Let $\mathbb D_p$ be the datatype of $p$-precision floats and define $p$-truncated addition ($+, \sum$), multiplication ($\cdot$), and division ($/$) over $\mathbb D_p$ as in \citet{merrill2023parallelism}. We now define the high-level structure of the transformer in terms of its core components, with the details of those components in \Cref{sec:modules}.

\begin{definition}[\citealt{merrill2023logic}]
A $p$-precision decoder-only transformer with $h$ heads, $d$ layers, model dimension $m$ (divisible by $h$), and feedforward width $w$ is specified by:
\begin{compactenum}
    \item An embedding function $e : \Sigma \times \mathbb N \to \mathbb D_p^m$ whose form is defined in \Cref{sec:embedding};
    \item For each $1 \leq \ell \leq d$ and $1 \leq k \leq h$, a head similarity function $s^\ell_k : \mathbb D_p^m \times \mathbb D_p^m \to \mathbb D_p$ whose form is defined in \Cref{sec:self-attention} \change{(and includes \changeC{projected} layer-norm)};
    \item For each $1 \leq \ell \leq d$ and $1 \leq k \leq h$, a head value function $v^{\ell}_k : \mathbb D_p^m \to \mathbb D_p^{m/h}$ whose form is defined in \Cref{sec:self-attention} \change{(and includes \changeC{projected} layer-norm)};
    \item For each $1 \leq \ell \leq d$, an activation function $f^\ell : (\mathbb D_p^{m/h})^h \times \mathbb D_p^m \to \mathbb D_p^m$ whose form is defined in \Cref{sec:feedforward} and implicitly uses the feedforward dimension $w$ \change{(and includes \changeC{projected} layer-norm)};
    \item An output function $\gamma : \mathbb D_p^m \to \Sigma$ parameterized as a linear transformation.
\end{compactenum}
\end{definition}

\begin{definition} \label{def:transformer-computation}
We define one decoding step $\Sigma^n \to \Sigma$ with a decoder-only transformer as follows:
\begin{compactenum}
    \item \underline{Embeddings:} For $1 \leq i \leq n$, $\mathbf h^0_i = e(x_i, i)$.
    \item \underline{Multihead Self Attention:} For each layer $1 \leq \ell \leq d$, we compute $h$ attention heads:
    \begin{equation*}
        \mathbf a^{\ell}_{i,k} = \sum_{j=1}^{c(i)} \frac{s^{\ell}_k(\mathbf h^{\ell-1}_i, \mathbf h^{\ell-1}_j)}{Z^\ell_{i,k}} \cdot v^{\ell}_k(\mathbf h^{\ell-1}_j) ,
        \quad \textrm{where} \;
        Z^\ell_{i,k} = \sum_{j=1}^{c(i)} s^{\ell}_k(\mathbf h^{\ell-1}_i, \mathbf h^{\ell-1}_j)
    \end{equation*}
    \change{and $c(i)$ is $i$ for standard causal attention and $i-1$ for strict causal attention.}

    \item \underline{Activation Block:} For $1 \leq \ell \leq d$, activation block $\ell$ maps the head outputs to $\mathbf h^\ell$:
    \begin{equation*}
        \mathbf h^{\ell}_i = f^{\ell}(\mathbf a^{\ell}_{i,1}, \ldots, \mathbf a^{\ell}_{i,h}, \mathbf h^{\ell-1}_i) .
    \end{equation*}
    \item \underline{Classifier Head:} The transformer output is $\gamma(\mathbf h^d_n)$.
\end{compactenum}
\end{definition}

\changeB{These definitions use 1-indexing, but when the input contains a beginning-of-sequence token $\$$ (\Cref{thm:automata,thm:lower-bound}), we will use 0-indexing starting at $\$$ in the natural way.}

\paragraph{Transformer Precision.} We consider log-precision transformers \citep{merrill2023parallelism}, i.e., we allow the transformer at most $c \log m$ precision for $m$ decoding steps. As a transformer with intermediate generation runs for $n$ input steps and $t(n)$ intermediate decoding steps, this means we have precision at most $c \log(n + t(n))$.
Log precision has been analyzed in prior work \citep{perez2021attention,merrill2023parallelism,merrill2023logic} because it gives the transformer just enough precision to represent indexes and sums across different positions. This means it naturally formalizes a bounded-precision transformer that is capable of representing position and computing uniform attention, two important capabilities for constructing algorithms with transformers.

Our lower bound constructions (\Cref{thm:automata,thm:lower-bound}) \change{assume the following:}

\begin{compactenum}
    \item \textbf{Saturated Attention.} A saturated transformer \citep{merrill2021effects} is an idealized transformer with ``averaging hard attention'' \citep{strobl2023transformers}: per head, all attention scores are either $0$ or $1/v$ for some $v$. This includes uniform attention ($1/n$ over $n$ tokens) or hard attention as special cases.
    Following common practice~\citep{perez2021attention,merrill2023parallelism}, we use saturated attention for our lower bound constructions.
    
    \item \change{\textbf{Strict Causal Masking.}
    The formulation of attention in \Cref{def:transformer-computation} makes the slightly nonstandard assumption that causally masked attention
    at position $i$ can view tokens at all positions up to $i-1$ but \emph{not} the current token $i$.
    This is required in \Cref{thm:lower-bound}.
    }
    
    \item \change{\textbf{\changeC{Projected} Pre-Norm.}
    Our lower bound constructions require $s_\ell$ and $f_\ell$ in \Cref{def:transformer-computation} to allow a generalization of standard pre-norm.
    \changeC{Normally, a layer-norm is applied to the entire input to each sublayer.
    We generalize this, allowing each sublayer to apply a linear projection before layer-norm. Crucially, in particular, this enables each layer to pick out a subset of the previous hidden state to apply layer-norm to (cf.~\Cref{def:projected-pre-norm} in \Cref{sec:prenorm}).}}
\end{compactenum}


\changeC{For convenience, our proofs with projected pre-norm use an even more general notion of pre-norm, namely \textbf{multi-pre-norm}, which allows each sublayer to take $k$ different projections of its input, apply layer-norm to each, and concatenate (cf.~\Cref{def:multi-pre-norm} in \Cref{sec:prenorm}). Multi-pre-norm can, however, be simulated by multiple layers of projected pre-norm (see \Cref{sec:prenorm} for a proof):
\begin{restatable}[\citealp{chiang2024:projected-pre-norm}]{proposition}{prenormequivalence}
    \label{prop:projected-vs-multi-pre-norm}
    Multi-pre-norm with $k$ norms can be simulated by $k + 1$ projected pre-norm layers.
\end{restatable}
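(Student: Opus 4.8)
The plan is to realize a single multi-pre-norm sublayer as $k$ ``helper'' sublayers that precompute the $k$ individual normalized projections and park them in fresh coordinates of the residual stream, followed by one final sublayer --- of the same type (attention or feedforward) as the original --- that reads the parked block through one ordinary projected pre-norm. Concretely, write the multi-pre-norm sublayer as $\vx \mapsto \vx + L\bigl(\mathrm{concat}(\lnorm(P_1\vx), \ldots, \lnorm(P_k\vx))\bigr)$, where $L$ is an attention or feedforward sublayer and each $P_j$ is a linear map. Widen the model dimension, reserving a scratch block of $\sum_j \dim(P_j\vx)$ coordinates that are zero before the gadget runs (e.g.\ by padding the embedding of \Cref{sec:embedding} with zeros). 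For $j = 1, \ldots, k$, helper layer $j$ is a feedforward sublayer with projected pre-norm whose projection is $P_j$, extended by zeros on the scratch coordinates so it sees only the original $\vx$; its feedforward network computes the identity on its $\lnorm(P_j\vx)$-valued input (a width-$2\dim(P_j\vx)$ ReLU network suffices, via $t = \mathrm{ReLU}(t) - \mathrm{ReLU}(-t)$, linearly routed into slot $j$) and adds zero elsewhere. The attention component of each helper layer is made inert by zeroing its value functions. After these $k$ helper layers, the residual stream at every position holds the original $\vx$ together with $\mathrm{concat}(\lnorm(P_1\vx), \ldots, \lnorm(P_k\vx))$ in the scratch block.

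\textbf{The final layer and the key lemma.} The $(k{+}1)$-st layer is $L$ itself under projected pre-norm whose projection $P$ selects exactly the scratch block, in order. The only thing to verify is that this layer's layer-norm does not corrupt the already-normalized data. The key observation is that $\lnorm$ acts as the identity on a concatenation of mean-zero, unit-variance blocks: if $\vy = (\vy^{(1)}, \ldots, \vy^{(k)})$ with each $\vy^{(j)}$ of coordinate mean $0$ and coordinate variance $1$, then $\vy$ has mean $0$ and variance $1$ as well --- the overall variance is the size-weighted average of the block variances --- so $\lnorm(\vy) = \vy$ independent of the block sizes. Hence $L$ in the final layer receives precisely $\mathrm{concat}(\lnorm(P_1\vx), \ldots, \lnorm(P_k\vx))$, the same input it had in the multi-pre-norm sublayer, and its output is added back into the original coordinates, reproducing $\vx + L(\cdots)$ exactly. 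Iterating sublayer by sublayer, using a disjoint scratch block for each of the $\O(d)$ sublayers (each of size $\O(m)$, so only a constant-factor increase in model dimension), simulates a full multi-pre-norm transformer by one using only projected pre-norm, paying $k+1$ layers per original multi-pre-norm layer. This matches the stated bound, which is driven by needing one project-then-normalize per individual norm, plus one more to gather them.

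\textbf{Main obstacle.} I expect no single deep step, but rather careful bookkeeping within the model of \Cref{def:transformer-computation}: ensuring the scratch coordinates are genuinely available and zero-initialized; arranging each helper projection to read only the original $\vx$-coordinates and each write to land in its designated slot without clobbering earlier ones; checking that the inert attention (under saturated, strict-causal normalization) and the identity feedforward are exactly realizable at the allowed log-precision; and confirming that the parameter-free (mean-zero, unit-variance) layer-norm of \Cref{def:projected-pre-norm}/\Cref{def:multi-pre-norm} is the convention used, so the ``layer-norm is a no-op on the concatenation'' lemma applies verbatim (any affine layer-norm parameters can be folded into adjacent linear maps). A minor side issue is whether a ``layer'' counts a full attention--feedforward block or a single sublayer, but this affects only the constant, not the $k+1$ count.
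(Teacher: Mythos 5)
Your proposal is correct and is essentially the paper's own construction: $k$ projected pre-norm layers each compute one $\lnorm(\mathbf M_i \mathbf v)$ and write it to reserved coordinates of the residual stream, and a final $(k{+}1)$-st layer of the original type reads the concatenation through a single projected pre-norm. The one discrepancy is your ``layer-norm is a no-op on the concatenation'' lemma: under the paper's convention $\lnorm(\mathbf x)=\mathbf x'/\norm{\mathbf x'}$ (unit Euclidean norm, not unit per-coordinate variance), the outer norm rescales the concatenation of $k$ unit-norm blocks by $1/\sqrt{k}$ rather than leaving it fixed, which the paper compensates by multiplying the final layer's weights by $\sqrt{k}$ and padding so that $k$ is a perfect square to keep $\sqrt{k}$ exactly representable---a minor repair of the kind you already gesture at when discussing folding normalization constants into adjacent linear maps.
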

}

\subsection{Models of Computation}

\paragraph{Automata.} A deterministic finite-state automaton is a tuple $A = \langle \Sigma, Q, q_0, \delta, F \rangle$ where
$\Sigma$ is a finite input vocabulary,
$Q$ is a finite set of states containing initial state $q_0$,
$\delta$ is a transition function $Q \times \Sigma \to Q$,
and $F \subseteq Q$ is a set of final states.
%
$A$ processes an input string $\sigma \in \Sigma^n$ as follows.
$A$ starts with state $q_0$ and reads $\sigma$ one token at a time, updating $q_i = \delta(q_{i-1}, \sigma_i)$ until $i = n$.
$A$ \emph{accepts} $\sigma$ if $q_n \in F$ and \emph{rejects} it otherwise. The language recognized by $A$ is the set of strings it accepts.


\paragraph{Turing Machines.} Adapting the notation of \citet{hopcroft2001introduction}, a multitape Turing machine is a tuple $\langle \Sigma, \Gamma,k, b, Q, q_0, \delta, F \rangle$ where:

\begin{compactenum}
    \item $\Sigma$ is a finite input vocabulary
    \item $\Gamma$ is a finite tape vocabulary with $\Sigma \subseteq \Gamma$
    \item $k$ is the number of work tapes
    \item $b$ is a blank symbol such that $b \in \Gamma$ and $b \not\in \Sigma$
    \item $Q$ is a finite set of states containing initial state $q_0$
    \item $\delta$ is a transition function $(Q \setminus F) \times \Gamma^{k+2} \to Q \times \Gamma^{k+1} \times \{\pm 1\}^{k+2}$
    \item $F \subseteq Q$ is a set of halting states
\end{compactenum}
\changeC{We define Turing machine computation in the standard way (cf.~\Cref{app:turing-machines}).}

\section{Lower Bounds for Transformer Decoders}

Prior work \citep{merrill2023logic} has established strong upper bounds on the reasoning problems transformers can solve. Specifically, under standard conjectures in complexity, transformers without intermediate decoding cannot recognize all regular languages.
In this section, we show some of these shortcomings can be overcome with a suitable number of intermediate decoding steps \changeC{(and projected pre-norm)}. Specifically, a linear number of steps enables simulating an automaton. We also show this can be extended
to simulate $t(n)$ Turing machine steps with $t(n)$ decoding steps.

\subsection{Introducing Layer-Norm Hash} \label{sec:lnorm-hash}

We first introduce a useful building block for our results that we call the \emph{layer-norm hash}. The layer-norm hash is a mechanism that enables retrieval across different columns in the transformer based on query-key matching of numerical values.
Exact-match retrieval is trivial when the query $q_i$ and keys $k_1, \ldots k_i$ are items in a finite set: just one-hot encode $q_i$ and $k_j$ and the inner product will be maximized when $q_i$ and $k_j$ match.
But this does not work when the keys and values are counts produced by uniform attention, which many transformer algorithms use \citep{weiss2021thinking}, as the key $q_i/i$ and query $k_j/j$ have \emph{different} denominators.
The layer-norm hash helps by transforming $q_i/i$ and $k_j/j$ so hard attention retrieves $j$ s.t. $q_i = k_j$.
Let $\lnorm(\mathbf x) = \frac{\mathbf x'}{\norm{\mathbf x'}}$, where $\mathbf x' = \mathbf x - \bar x$.


\begin{definition}[Layer-norm hash]
    For $x, y \in \mathbb R$, let
    $\hash(x, y) \triangleq \lnorm (x, y, -x, -y)$.
\end{definition}

$\phi(x, y)$ is a unit vector in $\mathbb R^4$. A key feature is scale invariance, and, in particular, that $\hash(x/i, 1/i)$ is invariant w.r.t.\ $i$ in the sense that it is only a function of $x$, independent of $i$.
Let $\hash_x \triangleq \hash(x, 1)$. Then we have the following properties, \change{whose proof may be found in \Cref{sec:lnorm-proof}.}
\begin{restatable}[Scale invariance]{lemma}{lnormcanonical}
    \label{lem:lnorm-canonical}
    For any $x \in \R$ and $i \in \Rpos$,
    $\hash(x/i, 1/i) = \hash_x$.
\end{restatable}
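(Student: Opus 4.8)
The plan is to show that $\phi(x/i,1/i)$ does not depend on $i$ by unwinding the definition of the layer-norm hash and the definition of $\lnorm$. Recall $\phi(x/i,1/i) = \lnorm(x/i, 1/i, -x/i, -1/i)$, and $\lnorm(\mathbf z) = \mathbf z' / \norm{\mathbf z'}$ where $\mathbf z' = \mathbf z - \bar z$ is the centered vector. The key structural observation is that the vector $(x/i, 1/i, -x/i, -1/i)$ is exactly $\tfrac{1}{i}$ times the vector $(x, 1, -x, -1)$, i.e.\ it is a positive scalar multiple (since $i \in \Rpos$) of the vector $v_x \triangleq (x, 1, -x, -1)$.

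First I would verify that $\lnorm$ is invariant under multiplication by any \emph{positive} scalar $c$. Centering is linear, so the centered version of $c \mathbf z$ is $c \mathbf z'$; and $\norm{c\mathbf z'} = c \norm{\mathbf z'}$ when $c > 0$; hence $\lnorm(c\mathbf z) = c\mathbf z' / (c\norm{\mathbf z'}) = \mathbf z'/\norm{\mathbf z'} = \lnorm(\mathbf z)$, provided $\mathbf z'$ is nonzero so the expression is well-defined. Applying this with $c = 1/i > 0$ and $\mathbf z = v_x$ gives $\phi(x/i, 1/i) = \lnorm(\tfrac1i v_x) = \lnorm(v_x) = \phi(x,1) = \hash_x$, which is the claim.

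The only thing that needs a moment's care is the non-degeneracy, i.e.\ that $v_x' \neq \mathbf 0$ so that $\lnorm$ is defined. The mean of the entries of $v_x = (x,1,-x,-1)$ is $0$, so $v_x' = v_x$ itself, and this is nonzero for every $x \in \R$ because the second coordinate is $1$. So in fact $\lnorm(v_x) = v_x / \norm{v_x} = (x,1,-x,-1)/\sqrt{2x^2+2}$, and the same computation shows $\phi(x/i,1/i)$ equals this regardless of $i$. This also makes transparent the ``scale invariance'' framing in the surrounding text: the purpose of appending the negations $-x,-y$ is precisely to force the coordinate mean to be $0$, so that the only effect of layer-norm is the normalization by Euclidean norm, which kills the common denominator $i$.

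I do not anticipate a real obstacle here; the proof is essentially a one-line computation once the positive-homogeneity of $\lnorm$ and the automatic centering of $v_x$ are noted. The one subtlety worth stating explicitly is the restriction $i \in \Rpos$ (not merely $i \neq 0$): for negative $c$, $\norm{c\mathbf z'} = |c|\,\norm{\mathbf z'} = -c\norm{\mathbf z'}$, so $\lnorm(c\mathbf z) = -\lnorm(\mathbf z)$, and the identity would fail by a sign. Since denominators arising from uniform attention over $i$ positions are positive, this restriction is exactly what we want, and the lemma as stated with $i \in \Rpos$ is what is needed downstream.
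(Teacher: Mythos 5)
Your proof is correct and matches the paper's argument in essence: both observe that $(x/i,1/i,-x/i,-1/i)$ already has mean zero, so layer-norm reduces to Euclidean normalization, and the positive factor $1/i$ cancels, yielding $\phi(x,1)=\phi_x$. Your explicit framing via positive homogeneity of $\lnorm$ and the remark on why $i\in\Rpos$ (rather than $i\neq 0$) is needed are fine additions but not a different route.
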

\begin{restatable}[Equality check via layer-norm hash]{lemma}{lnormunique} 
    \label{lem:lnorm-unique}
    For any $q, k \in \mathbb R$, $\phi_q \cdot \phi_k = 1$ if and only if $q = k$.
\end{restatable}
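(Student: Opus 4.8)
\textbf{Proof proposal for \Cref{lem:lnorm-unique} (Equality check via layer-norm hash).}

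The plan is to compute $\phi_q \cdot \phi_k$ explicitly and show it equals $1$ exactly when $q = k$. Recall that $\phi_x = \phi(x,1) = \lnorm(x, 1, -x, -1)$. The mean of the coordinates of $(x, 1, -x, -1)$ is $0$, so the centering step $\mathbf{x}' = \mathbf{x} - \bar{x}$ is a no-op here: $\phi_x = (x, 1, -x, -1)/\norm{(x,1,-x,-1)} = (x,1,-x,-1)/\sqrt{2(x^2+1)}$. First I would record this closed form for $\phi_x$ and $\phi_k$.

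Next I would take the inner product: $\phi_q \cdot \phi_k = \frac{(q,1,-q,-1)\cdot(k,1,-k,-1)}{\sqrt{2(q^2+1)}\,\sqrt{2(k^2+1)}} = \frac{2qk + 2}{2\sqrt{(q^2+1)(k^2+1)}} = \frac{qk+1}{\sqrt{(q^2+1)(k^2+1)}}$. So the claim $\phi_q \cdot \phi_k = 1$ becomes the claim $qk + 1 = \sqrt{(q^2+1)(k^2+1)}$. Since both $\phi_q$ and $\phi_k$ are unit vectors, Cauchy--Schwarz already gives $\phi_q \cdot \phi_k \le 1$ with equality iff $\phi_q = \phi_k$ (they cannot be antipodal, as one can check the second coordinate is positive for both), which reduces the ``only if'' direction to observing that $\phi$ restricted to its second coordinate being fixed at a positive value is injective in $x$. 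But it is cleaner to just square: $qk+1$ is positive precisely when... actually I would instead argue directly. Squaring the candidate identity, $(qk+1)^2 = q^2k^2 + 2qk + 1$ versus $(q^2+1)(k^2+1) = q^2k^2 + q^2 + k^2 + 1$; these are equal iff $2qk = q^2 + k^2$, i.e. iff $(q-k)^2 = 0$, i.e. iff $q = k$. The ``if'' direction is then immediate ($q=k$ gives $\phi_q = \phi_k$, a unit vector, so the dot product is $1$). For the ``only if'' direction I must also rule out the spurious root where $qk+1 < 0$: if $\phi_q\cdot\phi_k = 1 > 0$ then $qk + 1 > 0$, so squaring is reversible and $q = k$ follows.

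The only mild subtlety — and the step I would be most careful about — is the sign bookkeeping when squaring, i.e. confirming $qk+1 > 0$ under the hypothesis $\phi_q \cdot \phi_k = 1$ so that the equation $qk+1 = \sqrt{(q^2+1)(k^2+1)}$ is genuinely equivalent to its square. This is immediate since the right-hand side is nonnegative and the left-hand side equals $\phi_q \cdot \phi_k \cdot \sqrt{(q^2+1)(k^2+1)} = 1 \cdot \sqrt{(q^2+1)(k^2+1)} \ge 0$, and in fact strictly positive. Everything else is routine algebra. (As a sanity check, this is consistent with \Cref{lem:lnorm-canonical}: $\phi(x/i, 1/i) = \lnorm(x/i, 1/i, -x/i, -1/i)$, and since $\lnorm$ is invariant under positive scaling of a mean-zero vector, this equals $\lnorm(x,1,-x,-1) = \phi_x$.)
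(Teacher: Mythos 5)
Your proposal is correct and takes essentially the same route as the paper's proof: write out the closed form $\phi_x = (x,1,-x,-1)/\sqrt{2(x^2+1)}$, reduce the inner product to $(qk+1)/\sqrt{(q^2+1)(k^2+1)}$, and square to obtain $(q-k)^2 = 0$. The sign bookkeeping you flag is harmless but not actually needed for the ``only if'' direction (squaring an equality is always valid), and the paper proceeds exactly as you do without it.
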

\change{In other words, the inner product of these representations of two scalars $q$ and $k$, even if computed at different positions $i$ and $j$, respectively, allows us to check for the equality of $q$ and $k$.}
%
%
We can look up key $q_i/i$ in a sequence of keys $k_1/1, \ldots, k_{i-1}/(i-1)$ by attending with query $\hash(q_i/i, 1/i)$ at position $i$ and key $\hash(k_j/j, 1/j)$ at each $j < i$.
By \Cref{lem:lnorm-canonical,lem:lnorm-unique} this averages the values at all $j$ such that $q_i = k_j$.
The layer-norm hash can also be used to directly compare two values $q_i, k_j$ without removing the denominator by computing $\hash(q_i, 1)$ and $\hash(k_j, 1)$.

\subsection{Simulating Automata}

We can use the layer-norm hash to simulate models of computation like automata or Turing machines with intermediate-generation transformers. To warm up, we first show how to use the layer-norm hash to simulate an automaton (i.e., recognize a regular language) and then extend it in \Cref{sec:turing-machine} to show how a transformer can simulate a Turing machine for a bounded number of steps.

\begin{theorem}[Regular language recognition]
    \label{thm:automata}
    For any regular language $L$.
    there is a decoder-only \change{\changeC{projected} pre-norm transformer with strict causal saturated attention} \changeB{(with or without positional encodings) that, on input $\$x$,\footnote{\changeB{\Cref{thm:automata} goes through without strict causal masking (but \Cref{thm:lower-bound} will require strict masking).}}$^,$\footnote{\label{footnote:no-bos}\changeB{\Cref{thm:automata,thm:lower-bound} both go through without $\$$ as long as token $j$ can compute value $v_j = \mathbbm{1}[j=0]$.}}} checks whether $x \in L$ with $\abs{x} + 1$ decoding steps.
\end{theorem}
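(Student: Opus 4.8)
The plan is to simulate the automaton $A = \langle \Sigma, Q, q_0, \delta, F\rangle$ on input $\$x$ column-by-column: at each position $j$ (with $j=0$ at the $\$$), the transformer should compute the state $q_j$ that $A$ is in after reading $x_1 \ldots x_j$. The key difficulty, as the introduction emphasizes, is that a transformer has no recurrence, so computing $q_j$ cannot be done by literally iterating $\delta$ through $j$ updates within one forward pass. Instead, I would follow the standard ``chain-of-thought as recurrence'' idea: after the $n{+}1$ input positions (positions $0,\ldots,n$), the transformer emits a decoding token at each step encoding the next state, so that position $n{+}i$ holds $q_i$ (suitably reindexed), and after $\abs{x}+1 = n+1$ decoding steps it has produced $q_n$ and can output $\mathbbm 1[q_n \in F]$. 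Thus I only need each decoding step to implement one application of $\delta$: given that the current column can see (via attention) the previously emitted state and the relevant input token, compute the next state and emit it.

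The mechanics I would set up are as follows. First, at each column I need to know (a) the previously emitted state and (b) which input symbol to transition on. For (a), strict causal attention lets column $t$ look at column $t-1$; since $Q$ is finite, I one-hot encode states and read off the previous state by attending (with saturated/hard attention) to the immediately preceding position — this can be arranged by making the attention score peak at $j = t-1$, e.g. using positional information, or by the generic trick in footnote~\ref{footnote:no-bos} that lets token $j$ compute $\mathbbm 1[j=0]$ and hence relative positions. For (b), the decoding column at ``time $i$'' (i.e.\ absolute position $n+i$) must retrieve input token $x_{i+1}$. This is exactly the cross-column numerical retrieval that the layer-norm hash is designed for: each input position $j$ stores a key proportional to $j$ (which, divided by its position via uniform attention, becomes $j/(\text{pos})$), and the decoding column computes the matching query; by \Cref{lem:lnorm-canonical,lem:lnorm-unique}, saturated attention with the $\hash$ keys and queries retrieves precisely the column whose stored index matches, pulling in the right $x_{i+1}$. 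Projected (multi-)pre-norm is what lets a sublayer apply layer-norm to just the coordinates holding these numeric values without disturbing the one-hot state coordinates, so I would use \Cref{prop:projected-vs-multi-pre-norm} to freely use multi-pre-norm in the construction. Once the column has $q_{i}$ (from the previous column) and $x_{i+1}$ (via the hash), a single feedforward block computes $\delta(q_i, x_{i+1})$ — a finite function of a pair of one-hot vectors, hence implementable by an MLP with appropriate width — and writes the one-hot encoding of $q_{i+1}$, which the output function $\gamma$ then emits. I would handle the initialization by having the $\$$ column emit $q_0$, so the induction is seeded correctly.

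I would organize the proof as: (1) fix the target indexing and state the invariant ``after the $\$$ and $n$ input tokens and $i$ decoding steps, the last column encodes $q_i$''; (2) describe the embedding, storing the token, a positional index (or the means to compute relative position), and a slot for the one-hot state; (3) show one layer can, via strict causal attention, copy the previous column's state; (4) show one layer can, via the layer-norm hash and saturated attention, retrieve the correct input symbol for the current decoding step, invoking \Cref{lem:lnorm-canonical,lem:lnorm-unique}; (5) show the feedforward block computes $\delta$ and the classifier head emits the next state, or, at the final step, the accept/reject bit $\mathbbm 1[q_n \in F]$; (6) conclude by counting that $\abs{x}+1$ decoding steps suffice and noting the construction uses only strict causal saturated attention and projected pre-norm, and goes through with or without positional encodings per the footnotes.

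The main obstacle I expect is step (4): getting the index bookkeeping exactly right so that decoding column $i$ retrieves input column $i+1$ (not $i$ or $i{+}2$), given that uniform attention produces counts divided by the current position and that the total sequence length grows as decoding proceeds. This is precisely why the layer-norm hash is needed — its scale invariance (\Cref{lem:lnorm-canonical}) removes the offending denominator so that an equality check on raw indices (\Cref{lem:lnorm-unique}) works across columns computed at different positions — so the bulk of the work is packaging the indices into the hash correctly and verifying that saturated attention then isolates a unique source column. Everything else (copying a finite-state token from the adjacent column; computing $\delta$ in an MLP) is routine given the finiteness of $Q$ and $\Sigma$.
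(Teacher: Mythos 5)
Your overall route is essentially the paper's: carry the automaton state through the emitted chain-of-thought tokens, use the layer-norm hash (\Cref{lem:lnorm-canonical,lem:lnorm-unique}) together with projected/multi-pre-norm (\Cref{prop:projected-vs-multi-pre-norm}) and saturated attention to fetch the correct input symbol at each decoding step, compute $\delta$ with a feedforward block, and spend one final step emitting $\mathbbm{1}[q_n \in F]$. Two pieces of bookkeeping in your plan do not work as written, though both are repairable. First, the seeding: in the decoding model $f^{k+1}(x) = f^k(x)\cdot f(f^k(x))$, only the output at the \emph{last} position of the current prefix is ever appended, so ``having the \$ column emit $q_0$'' accomplishes nothing --- that emission is discarded. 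The paper instead has \emph{every} input position output $q_0$, so the pass ending at $x_n$ injects $q_0$ as the first generated token; alternatively, under your indexing (position $n+i$ holds $q_i$) the first decoding pass must compute $q_1=\delta(q_0,x_1)$ directly with $q_0$ hardcoded. One of these must be made explicit, and your claimed step count of $\abs{x}+1$ only comes out exactly under a convention of this kind, so the indexing deserves care rather than ``suitably reindexed.''

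Second, you should not route the previous state through attention to column $t-1$. With saturated attention there is no primitive ``peak at $j=t-1$'': realizing it would itself require a layer-norm-hash match on the value $t-1$ or tie-breaking machinery (which the paper only needs for \Cref{thm:lower-bound}), and the construction must also go through without positional encodings. The paper sidesteps this entirely: the previously emitted state token \emph{is} the current column's input token, so $q_{i-n-1}$ is available in the embedding for free. The only cross-column retrieval needed is of $\sigma_{i-n}$ --- exactly the step you correctly identified as the crux --- done by computing $1/(i-n)$ via uniform attention over \$ and the generated tokens, hashing it, and matching against $\hash(1/j,1)$ at input positions $j$, with keys/values zeroed out at decoding positions so the match is unique.
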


\begin{proof}
\change{Let $A$ be a finite-state automaton recognizing $L$.}
We will simulate one step of $A$ with one transformer decoding step (after first reading $n$ input tokens).
\changeB{We refer to tokens with $0$ indexing: $\$$ is token 0, $x_1$ is token 1, etc.}
At step $i, n \leq i \leq 2n$,
we will output a token $q_{i-n}$ encoding the next state of $A$. After printing the final state $q_n$, we use one additional step to output $1$ iff $q_n \in F$, \change{the set of final states of $A$}.
\changeB{At each token $i > 0$, we compute $1/i$ by attending uniformly over the strict left context with value $\mathbbm{1}[x_j = \$]$.} \change{We show by induction that at step $i \geq n$, we can output $q_{i-n}$.}

\underline{Base Case: $i=n$}.
\changeB{For $i \leq n$, we output $q_0$. Crucially, at $i=n$, this becomes the next input.}

\underline{Inductive Case: $i>n$}. We already have a sequence of intermediate tokens $q_0, \ldots, q_{i-n-1}$. Our goal is to compute $q_{i-n} = \delta(q_{i-n-1}, \sigma_{i-n})$, which first involves retrieving $q_{i-n-1}$ and $\sigma_{i-n}$. $q_{i-n-1}$ is the input to the current column of the transformer.
We will use hard attention to retrieve the current input symbol $\sigma_{i-n}$.
To do this, we attend uniformly over the prior decoding tokens and \$, with a value of $1$ at \$ and $0$ elsewhere. At \changeB{tokens $i > n$ (i.e., decoding tokens)}, this yields $\frac{1}{i - n}$. 
\changeC{Recall that projected pre-norm can simulate multi-pre-norm (\Cref{prop:projected-vs-multi-pre-norm}).}
\change{We now leverage the multi-pre-norm architecture to pass two layer-norms to a feedforward network:
\begin{equation*}
    \phi^\textrm{I}_i \triangleq \phi(1/i, 1) , \quad \quad
    \phi^\textrm{D}_i \triangleq \phi(1/(i-n), 1) .
\end{equation*}
Let $d_i \triangleq \mathbbm{1}[x_i \in Q]$, where $Q$ is the set of states of $A$.
Based on $d_i$, we select between $\phi^\textrm{I}_i$ and $\phi^\textrm{D}_i$:}
\begin{equation*}
    \change{\phi_i \triangleq \mathsf{ReLU}(-d_i\vec{1} + \phi^\textrm{I}_i) + \mathsf{ReLU}(d_i\vec{1} - \vec{1} + \phi^\textrm{D}_i).}
\end{equation*}
We attend with query $\lnorm(\phi_i) = \phi_i$, key $\lnorm(\phi_j) = \phi_j$ if $d_j = 0$ and $\vec 0$ otherwise, and value $\sigma_j$ if $d_j = 0$ and $\vec 0$ otherwise. By \Cref{lem:lnorm-unique}, at the current step $i$, the attention score is maximized when $j = i - n$, thus retrieving $\sigma_{i-n}$.
We now have the previous state $q_{i-n-1}$ and current token $\sigma_{i-n}$. We conclude by computing $q_{i-n} = \delta(q_{i-n-1}, \sigma_{i-n})$ with a feedforward network.
\end{proof}

\Cref{thm:automata} shows that a linear number of decoding steps gives additional reasoning power to log-precision transformers \changeC{with projected pre-norm} (assuming $\TC^0 \neq \NC^1$).
This follows because log-precision transformers with no decoding steps are contained in uniform $\TC^0$ \citep{merrill2023parallelism}, which means they cannot recognize all regular languages. In contrast, \Cref{thm:automata} says a linear number of steps is sufficient for recognizing all regular languages, establishing a conditional separation.
This is an example of simple and familiar additional computational power granted by additional decoding steps.
The core challenge in simulating an automaton is recurrence, which cannot be done without decoding steps \citep{merrill2023parallelism}. A linear number of decoding steps allows simulating recurrence, which is where the additional power comes from.
However, this added power does not stop with finite-state machines: the layer-norm hash can be used to simulate more complex models of computation such as Turing machines, which we will turn to next.

\subsection{Simulating Turing Machines} \label{sec:turing-machine}

We now show how a transformer decoder can simulate a Turing machine in real time using the layer-norm hash.
Our decoder-only construction resembles the encoder-decoder construction of \citet{perez2021attention}. However, it avoids simplifying assumptions from \citet{perez2021attention}. In addition to assuming non-standard attention and no layer-norm, they required $1/i, 1/i^2,$ and $i$ in the positional embeddings, which is problematic because transformers cannot represent unbounded scalars like $i$ due to layer-norm.
In contrast, our construction works \changeB{with or without positional encodings.} \change{However, it assumes strict causal masking and \changeC{projected} pre-norm (\Cref{sec:transformers}).}

\begin{theorem}[Turing machine simulation]
    \label{thm:lower-bound}
    Let $M$ be a Turing machine that, on input length \changeB{$1 + n$}, runs for at most $t(n)$ steps (at most polynomial).
    There is a decoder-only \changeC{projected} pre-norm transformer \change{with strict causal saturated attention} \changeB{(with or without positional encodings)} that, on input \changeB{$\$x$},$^\mathrm{\ref{footnote:no-bos}}$ \change{takes $t(n)$ decoding steps and then, with $\abs{M(x)}$ additional steps, outputs $M(x)$.}
\end{theorem}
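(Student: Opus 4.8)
The plan is to simulate one step of $M$ with one decoding step of the transformer, carrying over the ideas from \Cref{thm:automata} but now tracking a full Turing machine configuration rather than a single automaton state. The key data to maintain at decoding step $i$ (where $i = n, n+1, \ldots, n + t(n)$) are: the current control state of $M$, the symbols under each of the $k+2$ heads, and — this is the crux — the head positions on each tape. Head positions are unbounded integers, so we cannot store them directly (layer-norm forbids unbounded scalars); instead, as in \citet{perez2021attention}, we store them \emph{implicitly} via attention. Specifically, the token emitted at decoding step $i$ will encode $M$'s state at time $i-n$ together with the symbols it writes and the $\pm 1$ head movements; the head position on tape $\tau$ at time $s$ is then the signed sum of movements $\sum_{s' \le s} \mathrm{move}_\tau(s')$, which a single uniform-attention head over the decoding prefix can compute (up to the usual $1/(i-n)$ denominator). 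To read the symbol currently under head $\tau$, we need to retrieve the most recent decoding step $s' < s$ at which head $\tau$ was at its current position and wrote a symbol there (or fall back to the input tape / blank). This is exactly a query-key equality match on head-position \emph{counts} produced by uniform attention — the situation the layer-norm hash was designed for.

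The key steps, in order, are: (1) Set up the embedding so that each input token $x_i$ and each emitted decoding token carries its symbol and a flag $d_i = \1[x_i \text{ is a decoding token}]$; using strict causal masking and a uniform-attention head with value $\1[x_j = \$]$, every column $i$ computes $1/i$, and another uniform head over the decoding prefix computes $1/(i-n)$ at decoding columns. (2) Using multi-pre-norm (legitimate by \Cref{prop:projected-vs-multi-pre-norm}) together with \Cref{lem:lnorm-canonical}, convert these into scale-invariant hashes $\hash_{i}$ and $\hash_{i-n}$, and more importantly build, for each tape $\tau$, a running head-position count $p_\tau(s)/(i-n)$ by uniform attention over the movement values emitted so far, then hash it to $\hash_{p_\tau(s)}$. (3) For each tape $\tau$, attend with query $\hash_{p_\tau(s)}$ against keys $\hash_{p_\tau(s')}$ at earlier decoding columns $s' < s$, breaking ties toward the largest $s'$ (hard attention with a secondary positional tie-break, as in \citealt{perez2021attention}) and retrieving the symbol written there; combine with a separate lookup into the \emph{input} tape for positions $\le n$ that were never overwritten, defaulting to the blank symbol $b$ otherwise. (4) Feed the retrieved $k+2$ symbols and the current state into a feedforward network implementing $\delta$, emitting the new state, the $k+1$ written symbols, and the $k+2$ movements as the next token. (5) Detect when $M$ enters a halting state and switch to an output phase that walks the output tape left to right, emitting $M(x)$ one symbol per step (another $\abs{M(x)}$ steps), using the same head-position-hash retrieval to read each output cell. (6) For the base case $n \le i \le n$, emit the initial configuration token so that it becomes the first decoding input, mirroring \Cref{thm:automata}.

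The main obstacle is step (3): correctly retrieving the symbol under each head. The subtlety is that the naive "most recent write to the current position" must be reconciled against the initial input contents — a cell may have been written several times, or never (in which case we read $x_j$ if $j \le n$, else $b$) — and the equality match is on counts with a shared denominator $1/(i-n)$, which is exactly what \Cref{lem:lnorm-unique} handles but requires care to ensure \emph{all} the relevant counts (current head position, and each past head position) are genuinely produced with that same denominator so the hashes are comparable. Making the tie-break toward the largest matching $s'$ work with saturated/hard attention, and ensuring the argument respects log-precision (all counts are bounded by $t(n) \le \poly(n)$, so $1/(i-n)$ and the head positions are representable in $c\log(n+t(n))$ bits), is where the bookkeeping concentrates; everything else is a routine assembly of feedforward gadgets for $\delta$ and the phase transitions.
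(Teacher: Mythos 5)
Your overall plan coincides with the paper's: one decoding step per Turing machine step, emitted tokens encoding the ``diff'' (state, written symbols, head movements), head positions recovered as normalized signed sums of past movements via uniform attention, layer-norm hashes to match the current head position against past positions, an input-tape lookup with blank fallback, and a final phase of $\abs{M(x)}$ steps to emit the output. However, the step you yourself flag as the main obstacle --- retrieving the \emph{most recent} write to the current cell --- is precisely the one nontrivial ingredient, and your proposed resolution does not go through under the theorem's hypotheses. Under saturated attention, all maximizing positions receive equal weight and are \emph{averaged}; ``breaking ties toward the largest $s'$'' is not an available primitive. The tie-breaking of \citet{perez2021attention} that you appeal to relies on non-standard hard attention with built-in rightmost tie-breaking together with positional encodings containing $i$, $1/i$, and $1/i^2$ (and no layer-norm), exactly the assumptions this theorem is designed to avoid. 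What is needed, and what the paper supplies in \Cref{sec:ranking-term}, is a quantitative argument: inexact hash matches fall short of an exact match by at least $1/(2i^4)$ (\Cref{lem:inexact-error}), and one can compute inside the transformer a monotonically decreasing term $f(i) < 1/(2i^4)$ (\Cref{def:tie-breaker-term}, \Cref{lem:tie-breaker-is-small}), built as a linear combination of $1/i, \ldots, 1/(i-3)$ via auxiliary heads, which is added to the attention score through the key so that the rightmost exact match becomes the \emph{unique} maximizer. Without this (or an equivalent) construction, your step (3) does not pin down the latest write, and the simulation fails whenever a cell has been written more than once.

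A second, smaller omission: you say the retrieval should ``fall back'' to the input symbol or to the blank $b$ when the cell was never written (or the head is off the input portion), but you do not say how the transformer \emph{detects} this case. With saturated attention a failed match does not return a distinguished null value; it returns an average over spurious maximizers. The paper handles this by also retrieving the matched hash itself and checking whether it equals the query hash $\phi^\tau_i$, using the fact that an average of hashes of positions all lying strictly to one side of $h^\tau_i$ cannot equal $\phi(h^\tau_i,1)$ (\Cref{lem:unique-avg}, \Cref{lem:input-tape-iff}, \Cref{lem:ranked-exact-match-iff}); this is where the monotone-frontier structure of tape head movement is used. Your precision bookkeeping and the rest of the assembly (computing $1/i$ and the movement sums, the feedforward implementation of $\delta$, the halting/output phase) match the paper and are fine.
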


\begin{proof}
    We construct a transformer decoder that uses a single decoding step to simulate each Turing machine step. The main difficulty is representing a Turing machine tape in a sequence of transformer state vectors so that the transformer can always correctly reconstruct the value on the tape at the current head position. The key idea will be to store ``diffs'' to the tape at each step and use the layer-norm hash to dynamically reconstruct the contents at the head position at future steps.
    Concretely, let $\Delta$ be a finite vocabulary representing elements of $Q \times \Gamma^{k+1} \times \{0, \pm 1 \}^{k+2}$. The deterministic Turing machine run induces a \emph{diff sequence} $\delta_0, \ldots, \delta_{t(n)} \in \Delta$ capturing the state entered, tokens written, and directions moved after each token.
    \changeB{Following the proof of \Cref{thm:automata}, we use 0-indexing starting at the $\$$ token and compute $1/i$ at each token $i > 0$ as a representation of position.}
    We show by induction that at step $i \geq n$, we can output $\delta_{i-n}$.

    \underline{Base Case: $i = n$}. At every input token (crucially, the last one), we output $\delta_0 = \langle q_0, b^{k+1}, 0^{k+2} \rangle$.

    \underline{Inductive Case: $i>n$}.
    We first reconstruct $h^\tau_i$, the current position on each tape $\tau$. For each $\tau$, a head attends with query $1$, key $\mathbbm{1}[x_j \not\in \Sigma]$, and value being the move direction of $\tau$ at $j$. \changeB{Since we assume strict causal attention (for reasons that will become clear later), head $\tau$ thus computes $h^\tau_{i-1}/i$. Since we need $h^\tau_i$,} \change{we write both $(h^\tau_{i-1} \pm 1)/i$ to the residual stream.
    When we need $h^\tau_i/i$ going forward, we use a linear layer to select either $(h^\tau_{i-1} + 1)/i$ or $(h^\tau_{i-1} - 1)/i$ depending on if the current input $\delta_{i-n-1}$ contains a $+1$ move or a $-1$ move for $\tau$, respectively.}

    \change{We now use two layers to compute the contents at $h^0_i$ on the input tape.
    Similar to \Cref{thm:automata}, we use a feedforward network to implement the following piecewise comparison: 
        \begin{equation*}
        \phi^0_i
        \triangleq \begin{cases}
            \hash(1, 1/i) = \hash(i, 1) & \textrm{if} \; x_i \in \Sigma \\
            \hash(h^0_i/i, 1/i) = \hash(h^0_i, 1) & \textrm{otherwise.}
        \end{cases}
    \end{equation*}
    With some abuse of notation, let $\langle \cdot \rangle$ denote vector concatenation.
    We attend with query $\langle \phi^0_i, -1 \rangle$, key $\langle \phi^0_j, \mathbbm{1}[x_j \not\in \Sigma] \rangle$, and value \changeB{$\langle \phi^0_j, \sigma_j \rangle$}.\footnote{\change{As in \Cref{thm:automata}, a second $\lnorm$ gets applied to $\phi^0_i$ at the start of the layer but has no effect.}}
    \changeC{Let $\langle \bar \phi^0, \bar \sigma \rangle$ be the head output.
    We show in \Cref{sec:input-tape} that two properties hold.
    First, by \Cref{lem:input-tape-iff}, $\bar \phi^0 = \phi^0_i$ iff $1 \leq h^0_i \leq n$.
    Second, by \Cref{lem:input-tape-sigma}, if $1 \leq h^0_i \leq n$, then $\bar \sigma = \sigma_{h_i}$.
    }}
    \changeB{Based on this, we compute the value read from the input tape as $\gamma^0_i = \bar{\sigma}$ if $\bar{\phi}^0 = \phi^0_i$ and as $\gamma^0_i = b$ otherwise.}

    We now use a single attention layer to compute $\gamma^\tau_i$, the contents at $h^\tau_i$ on each non-input tape $\tau$.
    The layer uses two layer-norm hashes, again taking advantage of the multi-pre-norm architecture \changeC{that projected pre-norm can simulate (\Cref{prop:projected-vs-multi-pre-norm})}:
    \begin{align*}
        \phi^\tau_i &\triangleq \phi(h^\tau_i / i, 1/i) = \phi(h^\tau_i, 1) \\
        \psi^\tau_i &\triangleq \phi(f(i), 1) ,
    \end{align*}
    \changeC{where $f(i)$ is defined in \Cref{def:tie-breaker-term} in \Cref{sec:ranking-term}.
    Crucially, $f(i)$ is computable with a single transformer layer and monotonically decreasing with $i$.
    With strict causal masking, we attend with query $\langle \phi^\tau_i, e_1 \rangle$, key $\langle \phi^\tau_j, -\psi^\tau_j \rangle$, and value $\langle \phi^\tau_j, \delta_{j-n - 1} \rangle$. 
    Let $\langle \bar \phi^\tau, \bar \delta \rangle$ be the head output.
    We show in \Cref{sec:ranking-term} that two properties hold.
    First, by \Cref{lem:ranked-exact-match-iff}, $\bar \phi^\tau = \phi^\tau_j$ iff there is some $j < i$ s.t.~$h^\tau_i = h^\tau_j$.
    Second, by \Cref{lem:ranked-rightmost}, if there is some $j < i$ s.t.~$h^\tau_i = h^\tau_j$, then the head retrieves $\langle \phi^\tau_j, \delta_j \rangle$ for the greatest such $j$.}
    \changeC{Based on this, we compute the last-written value on tape $\tau$ at $h^\tau_i$ as $\gamma^\tau_i = [\bar \delta]_{2 + \tau}$ if $\bar \phi^\tau = \phi^\tau_i$ and $\gamma^\tau_i = b$ otherwise.}
    Having obtained all arguments for the transition function,
    we now compute
        $\delta_{i-n} = \delta(q_{i-n-1}, \sigma_{h^0_i}, \gamma^1_i, \ldots \gamma^{k + 1}_i)$
    with a feedforward net.

    Finally, we use $\abs{M(x)}$ steps to write the Turing machine output. \change{We detect we are at an output step if either some $\delta_j$ token to the left or the current input encodes a halting state. At each such token $i$, we compute $h^{k+1}_i/i$ as before (recall that tape $k+1$ is the output tape) via attention, except the value now is $d^{k+1}_i$ if $x_i \in \Delta$ and ${+}1$ otherwise.
    We attend as before using $h^{k+1}_i/i$ to retrieve (and output) $\gamma^{k+1}_i$. Thus, the $\abs{M(x)}$ tokens generated after a final state are precisely $M(x)$.}
\end{proof}

\change{The critical role \changeC{projected pre-norm or multi-pre-norm} play in \Cref{thm:automata,thm:lower-bound} suggest it could be interesting to investigate incorporating \change{these generalized pre-norms} into transformers in practice.}

\changeC{\Cref{thm:lower-bound} gives us a general result connecting the power of transformer decoders with $t(n)$ steps to Turing machines with the same number of intermediate steps:}

\begin{corollary}
    $\TIME(t(n)) \subseteq \CoT(t(n))$.
\end{corollary}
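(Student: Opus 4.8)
The plan is to derive this as a nearly immediate consequence of \Cref{thm:lower-bound}. Suppose $L \in \TIME(t(n))$, witnessed by a multitape Turing machine $M$ that halts within $\O(t(n))$ steps on every input of length $n$ and accepts exactly $L$. The goal is to produce a transformer $f$ with $f^{t(\abs x)}(x)$ ending in the accept symbol $1$ precisely when $x \in L$.

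First I would remove the hidden constant in ``$\O(t(n))$''. By the standard linear-speedup (block-compression) theorem for multitape Turing machines, for any $c$ there is an equivalent machine running in at most $t(n)/c + \O(n)$ steps; since $\TIME(t(n))$ is only nontrivial when $t(n) = \Omega(n)$ (a machine must at least scan its input), taking $c$ large enough yields a machine running in at most $t(n) - 1$ steps for all but finitely many input lengths, and the finitely many exceptional lengths can be hard-coded into the transformer's finite embedding/output tables. So without loss of generality $M$ itself runs in at most $t(n) - 1$ steps. Next I would invoke \Cref{thm:lower-bound} with running-time bound $t(n) - 1$: it gives a decoder-only projected pre-norm transformer with strict causal saturated attention that, on input $\$x$ (equivalently on $x$ with positional encodings, by the no-$\$$ footnote to \Cref{thm:lower-bound}), takes $t(n) - 1$ decoding steps and then, with $\abs{M(x)}$ further steps, emits $M(x)$. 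Because $L$ is a decision problem, $M(x) \in \{0,1\} \subseteq \Sigma$ is a single symbol, so $\abs{M(x)} = 1$; hence after exactly $t(n)$ decoding steps the transformer's most recent generated token is $M(x)$, which is $1$ iff $x \in L$. Unwinding the definition of the $k$-step extension $f^k$, this is exactly the statement $L \in \CoT(t(n))$.

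I expect no genuine obstacle here — all of the substantive work is already carried out in \Cref{thm:lower-bound} (the diff-sequence encoding and the layer-norm hash for tape reconstruction). The only points needing care are bookkeeping: (i) converting ``runs in $\O(t(n))$ steps'' into ``runs in at most $t(n) - 1$ exact steps'' via linear speedup, where one should observe that the additive $\O(n)$ term is harmless because $\TIME$ is meaningful only for $t(n) = \Omega(n)$; and (ii) the off-by-one between ``$t(n)$ Turing-machine steps plus one output step'' and ``$t(n)$ decoding steps,'' together with the optional beginning-of-sequence marker versus positional encodings, both of which \Cref{thm:lower-bound} and its footnotes already handle.
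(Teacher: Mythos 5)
Your derivation is essentially the paper's own: the paper states this corollary with no separate argument, treating it as an immediate consequence of \Cref{thm:lower-bound}, and your proof—apply the simulation, observe that for a decision problem $\abs{M(x)}=1$, and absorb the single output step into the step budget—is exactly that argument made explicit. The one wrinkle is in your added bookkeeping via linear speedup: the speedup theorem yields time about $\epsilon\, t(n) + n + 2$, so the additive linear term (and the fact that a machine must scan its input) prevents reducing to ``at most $t(n)-1$ exact steps'' whenever $t(n)$ is within a small constant factor of $n$ (e.g.\ $t(n)=n$), not merely at finitely many input lengths; in that regime your argument gives $\TIME(t(n)) \subseteq \CoT(c\, t(n))$ for some constant $c$ rather than $\CoT(t(n))$ on the nose. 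This matches the looseness already present in the paper (e.g.\ \Cref{thm:automata} uses $\abs{x}+1$ steps but is described as linear), so it is a caveat about constants and off-by-ones rather than a flaw in the substance of the proof.
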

Thus, simulating an automaton (cf.~\Cref{thm:automata}) is not the only new capability unlocked with $\O(n)$ \changeC{decoding} steps: rather, \changeC{such transformers} can solve \emph{any} problem a Turing machine can solve in $\O(n)$ time, such as simulating real-time counter machines \citep{weiss2018practical}.
With $\O(n^2)$ steps,
we can solve directed graph connectivity using standard graph traversal algorithms like depth-first search. Depth-first search runs in $\O(n)$ time on a random access Turing machine~\citep{wigderson1992complexity}, which can be simulated in $\O(n^2)$ time without random access.
Possibly, transformers can solve directed graph connectivity with fewer than $\O(n^2)$ steps, as results from \citet{zhang2023language} hint at.


\section{Upper Bounds for Transformer Decoders}

Having shown lower bounds on transformers with $t(n)$ steps,
we present two different upper bounds: one that relates transformer decoders to time complexity classes, and one that relates them to space complexity classes. The relative strength of the two different bounds will vary depending on $t(n)$.
%
%
A simple upper bound on transformers with chain of thought can be obtained based on the fact that transformers can be simulated using a quadratic number of arithmetic operations.

\begin{theorem} \label{sec:naive-upper-bound}
    $\CoT(t(n)) \subseteq \tildeTIME(n^2 + t(n)^2)$.
\end{theorem}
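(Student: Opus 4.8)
The plan is to bound the total cost of running the transformer forward on the full generated sequence by a polynomial in the final sequence length, and then observe that the final length is $n + t(n) + \abs{M(x)}$, which is $\O(n + t(n))$ for the languages in $\CoT(t(n))$ (a single accept/reject symbol is output, so $\abs{M(x)} = 1$). First I would recall that running $f$ with $t(n)$ decoding steps means computing $f^{t(n)}(x)$, i.e., iteratively appending one token at a time; so the whole computation amounts to running a single forward pass of the transformer on a sequence of length $N := n + t(n)$ (plus the constant for the answer), since each new column only attends to previous columns and positions, hence earlier hidden states never need to be recomputed. The key quantitative fact is that one forward pass of a log-precision transformer on an input of length $N$ can be carried out by a deterministic Turing machine in time $\tilde\O(N^2)$: there are $d = \O(1)$ layers, each with $h = \O(1)$ heads, and each head at each of the $N$ positions computes a sum of $\O(N)$ terms, each term being an arithmetic operation on $\O(\log N)$-bit floats, which costs $\poly(\log N)$ time on a multitape TM; the feedforward blocks and embeddings are cheaper. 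Multiplying $d \cdot h \cdot N \cdot N \cdot \poly(\log N)$ gives $\tilde\O(N^2)$.

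The main steps, in order: (i) note that the language recognized by $f$ with $t(n)$ steps is decided by simulating $f^{t(n)}(x)$ and checking whether the final token equals the accept symbol $1$; (ii) argue that this simulation is a single forward pass on a sequence of length $N = n + t(n) + \O(1)$, because the decoder is causal and token $j$'s hidden states depend only on tokens $< j$ (or $\le j$), so the natural left-to-right evaluation order computes every hidden state exactly once; (iii) bound the time of that forward pass on a multitape Turing machine by $\tilde\O(N^2)$ using the per-head/per-position accounting above together with the $p$-truncated arithmetic over $\mathbb D_p$ with $p = \O(\log N)$ (as set up in \Cref{sec:transformers}, following \citet{merrill2023parallelism}); (iv) substitute $N = \O(n + t(n))$ and use $(n + t(n))^2 = \O(n^2 + t(n)^2)$ to conclude $\CoT(t(n)) \subseteq \tildeTIME(n^2 + t(n)^2)$.

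One subtlety worth handling carefully is that the upper bound must hold for the general transformer model (with \emph{or} without the projected pre-norm, strict masking, and saturated attention assumptions used only for the lower bounds, as stated around \Cref{eq:general}); this is not an obstacle since the time analysis only uses that each sublayer is a fixed-arity composition of $p$-truncated arithmetic operations over $\mathbb D_p$, which is true regardless of these architectural choices — saturated attention, if anything, is a special case that is no harder to simulate. A second point is the $\log^k N$ overhead: the $\tilde{}$ in $\tildeTIME$ is exactly there to absorb the $\poly(\log N)$ cost of arithmetic on $\O(\log N)$-bit numbers on a non-random-access multitape machine, so no finer bookkeeping is needed. The step I expect to require the most care in writing is (iii) — making the $\tilde\O(N^2)$ count precise on a multitape (non-random-access) Turing machine, in particular arranging the tape layout so that each of the $\O(N)$ summands in an attention head can be located and accumulated in $\poly(\log N)$ amortized time rather than incurring an extra linear scan per summand — though this is a standard simulation argument and the constants and layout details can be deferred to an appendix.
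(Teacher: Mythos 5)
Your proposal is correct and follows essentially the same route as the paper: simulate the decoder on a multitape Turing machine, observing that causal (incremental, key/value-cached) decoding means each hidden state is computed once, so the total work is $\O\big((n+t(n))^2\big)$ attention summands, each an arithmetic operation on $\O(\log(n+t(n)))$-bit numbers costing polylog time, which the $\tildeTIME$ notation absorbs. The only cosmetic difference is framing the whole computation as one forward pass on the final length-$(n+t(n))$ sequence rather than as $n+t(n)$ successive passes over cached keys and values, and your stray reference to $\abs{M(x)}$ (a Turing-machine-output notion from the lower bound) is harmless since language recognition only checks the final accept token.
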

\begin{proof}
We sketch a multitape Turing machine that will simulate the transformer. Each forward pass $i$ appends key $i$ onto a work tape and value $i$ onto another work tape. To simulate the forward pass at time $i$, it suffices to show how to simulate computing self-attention at time $i$.
To compute self attention, the Turing machine first computes the query at time $i$. It then iterates over pairs on the key and value work tapes. For each pair $j$, we compute the attention score between query $i$ and key $j$ and then multiply it by value $j$ using additional work tapes. We then add this value to a running sum tape.
We treat the final sum at the output of the attention mechanism.

This runs $n + t(n)$ forward passes, and each forward pass loops over $n + t(n)$ key-value pairs. This means we run at most $\O(n^2 + t(n)^2)$ inner loop calls.
It remains to be shown that one inner loop runs in polylogarithmic time.
An inner loop just involves adding and multiplying $\O(\log n)$-bit numbers.
$p$-bit numbers can be added in time $\O(p) = \O(\log n)$.
Similarly, $p$-bit numbers can be multiplied in time $\O(p \log p) \leq \O(p^2)$, which comes out to $\O(\log^2 (n + t(n)))$ with log precision.
Thus, one inner loop can be run in polylogarithmic time.
We conclude that a transformer decoder with $t(n)$ intermediate steps can be simulated by a multitape Turing machine in time $\tildeO(n^2 + t(n)^2)$.
\end{proof}


Our second upper bound relies on the $\TC^0$ upper bound for transformers without intermediate steps.

\begin{theorem} \label{thm:space-ub}
    $\CoT(t(n)) \subseteq \SPACE(t(n) + \log n)$.
\end{theorem}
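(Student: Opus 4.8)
The plan is to simulate the transformer forward pass by a space-bounded Turing machine that never stores more than $\O(t(n) + \log n)$ bits at once, recomputing intermediate hidden states on demand rather than storing them. The key leverage point is the $\TC^0$ upper bound for a \emph{single} forward pass (\citealt{merrill2023parallelism}): a single decoding step of a log-precision transformer on a prompt of length $N$ is computable by an $\L$-uniform $\TC^0$ circuit, hence in $\O(\log N)$ space. Here the total sequence length after all intermediate generation is $N = n + t(n)$, so one forward pass costs $\O(\log(n + t(n))) = \O(\log n + \log t(n))$ space, which is within our budget as long as $t(n)$ is at most exponential (it is at most polynomial in our regimes anyway).

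\textbf{The main obstacle} is that the $k$-th forward pass does not act on the raw input: it acts on $x$ concatenated with the $k$ previously generated tokens $f(f^0(x)), \ldots, f(f^{k-1}(x))$, which we cannot afford to store if they are written to a tape naively for large $t(n)$ — except that there are only $t(n)$ of them and each is a single symbol from $\Sigma$, so the whole chain of thought fits in $\O(t(n))$ space. That is exactly why the bound is $t(n) + \log n$ and not $\log n$: we maintain a work tape holding the generated prefix $y_1 \cdots y_k \in \Sigma^k$ (with $k \le t(n)$), and to produce the next token we run the log-space simulator of one forward pass, treating the combined string $x \cdot y_1 \cdots y_k$ as its input (random access into this combined string is free — it is just the input tape plus a work tape of length $\le t(n)$). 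The simulator uses only $\O(\log(n+k))$ additional space, which we reuse across the $t(n)$ passes; we append each newly computed token $y_{k+1}$ to the work tape and repeat. After $t(n)$ steps (or upon generating the special symbol $1$), we read off the answer.

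\textbf{The steps, in order:}
\begin{compactenum}
    \item Recall / invoke the result that a single decoding step of a log-precision decoder-only transformer on an input of length $N$ lies in (uniform) $\TC^0 \subseteq \SPACE(\O(\log N))$; in particular there is a Turing machine $\mathcal{M}_{\mathrm{step}}$ that, given read access to a length-$N$ string, outputs the next token using $\O(\log N)$ work space.
    \item Build the outer machine $\mathcal{M}$: it keeps a work tape $G$ initially empty. It repeats $t(n)$ times: run $\mathcal{M}_{\mathrm{step}}$ with ``virtual input'' $x \cdot G$ of length $n + \abs{G} \le n + t(n)$ — whenever $\mathcal{M}_{\mathrm{step}}$ requests position $p$ of its input, return $x_p$ if $p \le n$ and $G_{p - n}$ otherwise — capture the emitted token, and append it to $G$.
    \item Bound the space: $G$ uses $\O(t(n) \log\abs{\Sigma}) = \O(t(n))$ cells; each invocation of $\mathcal{M}_{\mathrm{step}}$ uses $\O(\log(n + t(n))) = \O(\log n + \log t(n)) \subseteq \O(t(n) + \log n)$ cells, and this space is freed and reused between invocations. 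Total: $\O(t(n) + \log n)$.
    \item Accept iff the final generated token (after $t(n)$ steps, or the first time token $1$ is generated) equals the accept symbol $1 \in \Sigma$.
\end{compactenum}

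One subtlety worth a sentence in the writeup: the precision the transformer is allowed scales as $c \log(n + t(n))$ because the sequence has that length, so the ``log-precision single forward pass is in $\TC^0$'' result must be applied with that precision bound, giving circuits (and hence a space-$\O(\log(n+t(n)))$ simulation) of the required size. This is consistent because $\log(n + t(n)) = \O(\log n)$ whenever $t(n)$ is polynomially bounded, which covers all the regimes of interest (log, linear, polynomial steps), but the clean statement $\SPACE(t(n) + \log n)$ holds even without that assumption since $\log t(n) \le t(n)$.
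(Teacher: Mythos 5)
Your proposal is correct and matches the paper's own argument essentially step for step: invoke the single-forward-pass $\TC^0 \subseteq \L$ simulation of \citet{merrill2023parallelism}, keep an $\O(t(n))$-size buffer of generated tokens, reuse the $\O(\log(n+t(n)))$ workspace across the $t(n)$ invocations, and absorb $\log t(n)$ into $t(n)$ to get $\SPACE(t(n)+\log n)$. No substantive differences to report.
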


\begin{proof}
    Since log-precision transformers can be simulated in uniform $\TC^0$ \citep{merrill2023parallelism}, they can be simulated in $\L$, i.e., with at most $c \log n$ space overhead on inputs of size $n$.
%
    To compute $t(n)$ intermediate decoding steps of a transformer, we store a buffer of at most $t(n)$ generated tokens, which has size $\O(t(n))$. To compute the next token, we call the transformer with an input of size $\O(n + t(n))$ using at most $c \log(n + t(n))$ space overhead.
    We then clear the memory used and append the finite token generated to the input buffer.
    It follows from this algorithm that
    \begin{align*}
        \CoT(t(n)) &\subseteq \SPACE(t(n) + c \log(n + t(n))) \\
        &\subseteq \SPACE(t(n) + \log n) . \qedhere
    \end{align*}
\end{proof}

With at least $\Omega(\log n)$ steps, this upper bound can be simplified to $\SPACE(t(n))$.
The $t(n) = \Theta(n)$ case establishes the context-sensitive languages as an upper bound for transformers with linear steps.
Given our $\TIME(t(n))$ lower bound~(\Cref{thm:lower-bound}), the tightest possible space upper bound without making fundamental complexity advances would be $\SPACE(t(n) / \log t(n))$ \citep{Hopcroft1977OnTV}. Conversely, our lower bound can only be tightened to $\TIME(t(n) \log t(n))$.

On the other hand, with only $\O(\log n)$ decoding steps, intermediate decoding does not increase expressive power much beyond $\TC^0$, because the upper bound simplifies to $\SPACE(t(n)) = \L$.
Thus, under standard assumptions, transformers with a logarithmic number of decoding steps cannot solve directed graph connectivity, Horn formula satisfiability, or other $\NL$- or $\P$-complete problems. Yet, they may be able to solve $\L$-complete problems, unlike transformers without decoding steps.

\section{Conclusion}

We have shown that intermediate decoding steps extend the formal power of transformers well beyond previously known upper bounds, such as $\TC^0$ circuits and $\FOM$ logic, on transformers without intermediate decoding. Further, the amount of additional power is closely related to the number of decoding steps. In particular, transformers with a linear number of decoding steps have the capacity to recognize regular languages, but cannot recognize languages beyond context-sensitive. With a log number of decoding steps, such transformers can only recognize languages in $\L$, which is a complexity class relatively close to $\TC^0$. Thus, it appears that a linear number of intermediate decoding steps may be required to overcome the limitations of transformers on many sequential reasoning problems of interest.
In future work, it may be possible to derive a strict separation between transformers with a log and a linear number of decoding steps and show that certain problems that currently have a quadratic bound can in fact be solved with a roughly linear number of steps.


We have focused on expressive power, rather than analyzing learnability. 
Whereas our upper bounds directly reveal limitations on what transformers with intermediate generation can learn, one caveat is that our lower bounds \emph{do not} directly imply transformers can learn to use intermediate steps effectively.
It would be interesting to formally investigate transformers with CoT from a learning-theoretic lens, possibly along the lines of \citet{malach2023autoregressive}, and how different kinds of fine-tuning, such as reinforcement learning, might better allow models to use the power of chain of thought.


\section*{Acknowledgements}
\change{We thank David Chiang for the valuable feedback and for identifying a mismatch between the transformer definition in an earlier version of this paper and standard pre-norm transformers.}
We also appreciate helpful comments from
Gabriel Faria, Ofir Press, Abulhair Saparov, Jason Wei, and Avi Wigderson,
as well as researchers in ML2 at NYU and at AI2.
WM was supported by NSF Award 1922658, an NSF Graduate Research Fellowship, and AI2.



\bibliography{references}
\bibliographystyle{iclr2024_conference}

\appendix
\section{Transformer Components}
\label{sec:modules}

This section \changeC{formally defines our generalizations of pre-norm and then} recalls the definition from \citet{merrill2023logic} for the components of the transformer layer.

\subsection{Generalized Pre-Norm} \label{sec:prenorm}

We assume a pre-norm \citep{xiong2020on} parameterization of the transformer for concreteness,
\changeC{as}
this is more standard in newer transformers.
\change{As stated in \Cref{sec:transformers}, we \changeC{allow transformer sublayers to apply a linear projection before layer-norm. Concretely, we define $\slnorm(\mathbf v)$ as follows:

\begin{definition}[Projected pre-norm]
    \label{def:projected-pre-norm}
    Let $\mathbf M : \mathbb R^m \to \mathbb R^m$ be a parameter matrix that projects $m$-dimensional vectors to $m$-dimensional vectors.
    \begin{equation*}
        \slnorm(\mathbf v; \mathbf M) = \lnorm(\mathbf M \mathbf v) .
    \end{equation*}
\end{definition}

We will omit the parameter $\mathbf M$ for convenience, instead writing $\slnorm(\mathbf v)$.


Our proofs also allow} multiple pre-norms of different projections of the hidden state for our lower-bound constructions.
Concretely, we define $\mlnorm(\mathbf v)$ as follows.}


\begin{definition}[Multi-pre-norm]
    \label{def:multi-pre-norm}
    Let $m$ be divisible by $k$. 
    For $1 \leq i \leq k$, let $\mathbf M_i : \mathbb R^m \to \mathbb R^{m / k}$ be a parameter matrix that projects $m$-dimensional vectors to $m/k$-dimensional vectors.
    Let $\langle \cdot \rangle_{i=1}^k$ denote iterated concatenation.
    The \emph{multi-pre-norm} of $\mathbf{v} \in \R^m$ is defined as
    \begin{equation*}
        \mlnorm(\mathbf v; \mathbf M_1, \ldots \mathbf M_k) = \left\langle \slnorm(\mathbf v; \mathbf M_i) \right\rangle_{i=1}^k .
    \end{equation*}
\end{definition}
\changeC{As for projected pre-norm, we will omit the parameters $\mathbf M_1, \ldots \mathbf M_k$ for multi-pre-norm, instead writing $\mlnorm(\mathbf v)$.}

\changeC{As noted earlier, projected pre-norm can simulate multi-pre-norm:

\prenormequivalence*

\begin{proof}
    We will simulate a multi-pre-norm layer that takes as input
    \begin{equation*}
        \lnorm(\mathbf M_1 \mathbf v), \ldots, \lnorm(\mathbf M_k \mathbf v)
    \end{equation*}
    using only projected pre-norm layers. The idea is to use $k$ different projected pre-norm layers (one for each input layer norm).
    At layer $i$, the layer takes as input $\lnorm(\mathbf M_i \mathbf v)$ and writes this to the residual stream.
    Then, after these $k$ layers, a final layer reads as input
    \begin{equation*}
        \lnorm \left( \left\langle \lnorm( \mathbf M_i \mathbf v)\right\rangle_{i=1}^k \right) .
    \end{equation*}
    Since each vector in the concatenation has unit norm, this is equivalent to
    \begin{equation*}
        \frac{1}{\sqrt{k}} \left\langle \lnorm( \mathbf M_i \mathbf v)\right\rangle_{i=1}^k .
    \end{equation*}
    It follows that this layer receives essentially the same input as the original multi-pre-norm layer, up to a constant factor. The original weights for the layer can be multiplied by $\sqrt{k}$ to implement the same computation as the original layer. To make sure that $\sqrt{k}$ is exactly representable, we can pad the number of entries so that $k$ is a perfect square.
\end{proof}
}

\subsection{Transformer Embeddings} \label{sec:embedding}

For each position $1 \leq i \leq n$,
the transformer embedding function represents token $\sigma_i \in \Sigma$ and its position $i$ with a vector.
Let $\mathbf V$ be an embedding matrix of size $\abs{\Sigma} \times m$ where each row represents the embedding for some $\sigma$.
Let $f : \mathbb N \to \mathbb D_p^m$ be computable in time $\O(\log n)$.
Then,
\begin{equation*}
    e(\sigma_i, i) = \mathbf v_{\sigma_i} + f(i) .
\end{equation*}

\subsection{Self Attention} \label{sec:self-attention}

The two components of the self attention block are $s$, the similarity function, and $v$, the value function.
Let $\mathbf h_i$ be the hidden state at the previous layer and $\bar {\mathbf h}_i = \mlnorm(\mathbf h_i)$.
We define similarity of keys and queries as follows:
\begin{align*}
    s(\mathbf h_i, \mathbf h_j) &= \exp \left( \frac{\mathbf q_i^\top \mathbf k_i}{\sqrt{m/h}} \right)
    , \quad\quad \textrm{where} \;
    \begin{array}{l}
        \mathbf q_i = \mathbf W_q \bar {\mathbf h}_i + \mathbf b_q \\
        \mathbf k_i = \mathbf W_k \bar {\mathbf h}_i + \mathbf b_k
    \end{array} .
\end{align*}
Then the value function is defined $v(\mathbf h_i) = \mathbf W_h \bar {\mathbf h}_i + \mathbf b_h$.

\subsection{Activation Block} \label{sec:feedforward}

The activation function $f$ encapsulates the aggregation of the attention head outputs and the feedforward subnetwork of the transformer. $f$ takes as input attention head outputs $\mathbf a_{i,1}, \ldots, \mathbf a_{i,h} \in \mathbb D_p^{m/h}$ and the previous layer value $\mathbf h_i$.

The first part of the activation block simulates the pooling part of the self-attention sublayer.
The head outputs are first concatenated to form a vector $\mathbf a_i$, which is then passed through an affine transformation $(\mathbf W_o, \mathbf b_o) : \mathbb D_p^m \to \mathbb D_p^m$ followed by residual connections to form the sublayer output $\mathbf o_i \in \mathbb D_p^m$:
\begin{equation*}
    \mathbf o_i = \mathbf W_o \mathbf a_i + \mathbf b_o + \mathbf h_i .
\end{equation*}

The second part of the activation block first applies multi-layer-norm and then simulates the feedforward subnetwork to compute the next layer vector $\mathbf h'_i$.
Let $\bar{\mathbf o}_i = \mlnorm(\mathbf o_i)$.
Let $\sigma$ be a nonlinearity computable in linear time on its input (in the most standard transformer, ReLU). Then, for affine transformations $(\mathbf W_1, \mathbf b_1) : \mathbb D_p^m \to \mathbb D_p^w$ and $(\mathbf W_2, \mathbf b_2) : \mathbb D_p^w \to \mathbb D_p^m$, the feedforward subnetwork can be defined as:
\begin{align*}
    \mathbf h'_i = \mathbf W_2 \sigma(\mathbf W_1 \bar{\mathbf o}_i + \mathbf b_1) + \mathbf b_2 + \mathbf o_i .
\end{align*}

\section{Turing MAchines} \label{app:turing-machines}

A Turing machine takes as input a string $\sigma \in \Sigma^*$.
A \emph{configuration} of a Turing machine is a finite state $q$ along with the contents of an \emph{input} tape $c^0$, $k$ work tapes $c^1, \ldots, c^k$, and an output tape $c^{k+1}$. Finally, for each tape $\tau$, a configuration specifies a head position $h^\tau$.
We start with the initial state $q_0$ and the input tape $c^0_0$ containing $\sigma$ starting at position $0$ with infinite $b$'s on each side, and $h^0_0 = 0$.
All other tapes start containing all $b$'s and with their head at $0$. 
At each time step $i$, if $q_i \not\in F$, we recurrently update the configuration by first computing:
\begin{equation*}
    \langle q_{i+1}, \gamma^1_i, \ldots, \gamma^{k+1}_i, d^0_i, \ldots, d^{k+1}_i \rangle = \delta(q_i, c^0_i[h^0_i], \ldots, c^{k+1}_i[h^{k+1}_i]) .
\end{equation*}
We then update tape $\tau$ by setting $c^\tau_{i+1}[h^j_i] = \gamma^j_i$ and keeping all other tape cells the same. We update the head position on tape $\tau$ according to
$h^\tau_{i+1} = h^\tau_i + d^\tau_i$.
On the other hand, if $q_i \in F$, the Turing machine halts and \emph{outputs} the string of tokens on the output tape from the current head position on the left up to (but not including) the first $b$ on the right.
%
A Turing machine can also be viewed as a language recognizer if we set $\Sigma = \{0, 1\}$ and check if the first output token is $0$ or $1$.

\section{Layer-Norm Hash} \label{sec:lnorm-proof}

\lnormcanonical*

\begin{proof}
    Let $\mathbf v_x = \langle x/i, 1/i, -x/i, -1/i\rangle$.
    $\mathbf v_x$ is constructed with mean $0$, so layer-norm reduces to RMS-norm \citep{zhang2019rms}. Thus,
    \begin{align*}
        \hash(x/i, 1/i)
        &= \mathbf v_x / \norm{\mathbf v_x} \\
        &= \mathbf v_x \cdot \frac{i}{\sqrt{2x^2 + 2}} \\
        &= \frac{1}{\sqrt{2x^2 + 2}} \langle x, 1, -x, -1 \rangle \\
        &= \phi(x, 1) 
    \end{align*}
    which, by definition, is $\phi_x$.
\end{proof}
%

\lnormunique*

\begin{proof}
    \change{By the definition of layer-norm hash, we have}
    \begin{align*}
        \hash(q, 1) \cdot \hash(k, 1)
        &= \frac{1}{\sqrt{2q^2 + 2}} \langle q, 1, -q, -1 \rangle \cdot \frac{1}{\sqrt{2k^2 + 2}} \langle k, 1, -k, -1 \rangle \\
        &= \frac{2qk + 2}{\sqrt{(2q^2 + 2)(2k^2 + 2)}} \\
        &= \frac{qk + 1}{\sqrt{(q^2 + 1)(k^2 + 1)}} .
    \end{align*}
    The inner product of unit-norm vectors is maximized at $1$.
    In this case, we show that it achieves $1$ only when $q = k$, meaning that is the unique maximum:
    \begin{align*}
        1 &= \frac{qk + 1}{\sqrt{(q^2 + 1)(k^2 + 1)}} \\
        (qk + 1)^2 &= (q^2 + 1)(k^2 + 1) \\
        (qk)^2 + 2qk + 1 &= (qk)^2 + q^2 + k^2 + 1 \\
        2qk &= q^2 + k^2 \\
        0 &= (q - k)^2 .
    \end{align*}
    We conclude that $\hash_q \cdot \hash_k$ is maximized \change{(to $1$)} if and only if $q = k$.
\end{proof}

As the layer-norm hash is constructed to have mean 0, it does not require a fully general layer-norm implementation and can, in fact, be implemented with simplified RMS norm \citep{zhang2019rms}.






\section{Input Tape Retrieval via the Layer-Norm Hash} \label{sec:input-tape}

We now describe an attention head that uses the layer-norm hash to read from the input tape in \Cref{thm:lower-bound}.
Define a sequence $h_1, \ldots, h_{i-1}$, which represents Turing machine tape position in \Cref{thm:lower-bound}.

We define the following layer-norm hash based quantity, which is instantiated in \Cref{thm:lower-bound} in a particular way:
\begin{equation*}
    \phi_i =
    \begin{cases}
        \phi(i, 1) & \textrm{if} \; 1 \leq i \leq n \\
        \phi(h_i, 1) & \textrm{otherwise}.
    \end{cases}
\end{equation*}
The attention head we construct can then be described as follows:
\begin{compactitem}
    \item Query: $\langle \phi_i, -1 \rangle$
    \item Key: $\langle \phi_j, \mathbbm{1}[n < j] \rangle$
    \item Value: $\mathbf v_j \triangleq \langle \phi_j, \sigma_j \rangle$
\end{compactitem}
Let $\mathbf{\bar v} \triangleq \langle \bar \phi, \bar \sigma \rangle$ be the head output.
This head obeys the following properties:

\begin{lemma} \label{lem:input-tape-iff}
    Let $i > n$. Then, $1 \leq h_i \leq n$ if and only if $\bar \phi = \phi_i$.
\end{lemma}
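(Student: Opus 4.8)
The plan is to analyze the attention head's behavior by exploiting the structure of the query/key/value vectors and the fact that the head uses saturated (averaging hard) attention. The query is $\langle \phi_i, -1\rangle$ and the key at position $j$ is $\langle \phi_j, \mathbbm{1}[n < j]\rangle$, so the attention score between them is $\phi_i \cdot \phi_j - \mathbbm{1}[n < j]$. Since each $\phi_j$ is a unit vector (being a layer-norm hash output), by \Cref{lem:lnorm-unique} we have $\phi_i \cdot \phi_j \leq 1$ with equality iff the underlying scalars agree; hence the inner-product term is at most $1$, and the penalty $-\mathbbm{1}[n<j]$ means that decoding positions ($j > n$) are scored strictly below any input position ($j \leq n$) achieving a perfect match. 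The key observation is that the scalar underlying $\phi_i$ when $i > n$ is $h_i$, while the scalar underlying $\phi_j$ for an input position $1 \leq j \leq n$ is $j$ itself; so a perfect match $\phi_i \cdot \phi_j = 1$ at an input position happens exactly when $j = h_i$, which is possible iff $1 \leq h_i \leq n$.

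First I would establish the forward direction: suppose $1 \leq h_i \leq n$. Then position $j = h_i$ is an input position with $\phi_j = \phi(h_i, 1) = \phi_i$, giving score $\phi_i \cdot \phi_i - 0 = 1$. I claim this is the unique maximizer: any other input position $j' \neq h_i$ has $\phi_{j'} = \phi(j', 1) \neq \phi_i$ so scores strictly less than $1$ by \Cref{lem:lnorm-unique}; and any decoding position $j > n$ scores at most $\phi_i \cdot \phi_j - 1 \leq 1 - 1 = 0 < 1$. So saturated attention places all its weight on position $h_i$, and the head output is exactly the value there, namely $\langle \phi_{h_i}, \sigma_{h_i}\rangle = \langle \phi_i, \sigma_{h_i}\rangle$; in particular $\bar\phi = \phi_i$.

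Next I would handle the converse by contrapositive: suppose $h_i \notin [1, n]$, i.e., either $h_i < 1$ or $h_i > n$. I need to show no position attains score $1$, so that $\bar\phi$ is a proper convex combination of vectors $\phi_j$ that are all strictly shorter than $\phi_i$ in the relevant inner-product sense — hence $\bar\phi \neq \phi_i$. For any input position $j \leq n$, the underlying scalar is $j \in [1, n]$, which cannot equal $h_i \notin [1,n]$, so $\phi_i \cdot \phi_j < 1$ by \Cref{lem:lnorm-unique}; for any decoding position $j > n$, the score is $\phi_i \cdot \phi_j - 1 \leq 0$. So the maximum score over all positions is some $\alpha < 1$, and saturated attention averages uniformly over the argmax set $S$; the head output's first block is $\bar\phi = \frac{1}{|S|}\sum_{j \in S}\phi_j$. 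To conclude $\bar\phi \neq \phi_i$, I would argue that $\phi_i \cdot \bar\phi = \frac{1}{|S|}\sum_{j\in S}\phi_i \cdot \phi_j \leq \alpha < 1$, whereas $\phi_i \cdot \phi_i = 1$, so $\bar\phi \neq \phi_i$. (One subtlety: if $S$ contains only decoding positions, then each term in the sum is $\phi_i \cdot \phi_j$ where $\phi_i \cdot \phi_j - 1 = \alpha$, so $\phi_i \cdot \phi_j = \alpha + 1 \leq 1$; we still get $\phi_i \cdot \bar\phi \leq 1$, and it is strictly less than $1$ unless every $\phi_j$ in $S$ equals $\phi_i$, which would require the underlying scalar of some decoding position $j > n$ to equal $h_i$ — this is possible, but then the score is $0 = \alpha$, and we are in the case where no input position matches, consistent with $h_i \notin [1,n]$; the output $\bar\phi$ would then equal $\phi_i$. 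So I need to be careful here.)

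The main obstacle, as the parenthetical above flags, is the corner case where $h_i \notin [1, n]$ but $h_i$ coincidentally equals $h_j$ for some earlier decoding position $j > n$: then $\phi_i \cdot \phi_j = 1$ and the score is $1 - 1 = 0$, which could still be the global maximum (if no input position matches, the best input score is $< 1$, but could it be negative? No — $\phi_i \cdot \phi_j \geq$ something; actually inner products of these hash vectors are $\frac{qk+1}{\sqrt{(q^2+1)(k^2+1)}}$ which can be negative). So I would need to check whether the best input-position score can fall below $0$. If $n \geq 1$ and the relevant scalars are such that some input position scores above $0$, then decoding matches at score $0$ lose and we are fine; but in general I would need an additional structural fact from how $\phi_i$ is instantiated in \Cref{thm:lower-bound}. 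Re-examining: in \Cref{thm:lower-bound} the scalars underlying $\phi_j$ for input positions range over $1, \ldots, n$ (or rather $\phi(i,1)$), all positive, and $h_i$ is a tape head position; the intended reading is that $h_i$ can never equal a previous decoding position's stored value because those store $h_j$ for $j$ a decoding step and the head positions form a walk — but two positions on a walk can certainly coincide. The resolution must be that when $h_i \notin [1,n]$, we do not actually care what $\bar\phi$ is for retrieval purposes, but the lemma as stated requires $\bar\phi \neq \phi_i$; so I would look for the genuine argument, which is likely that the value vectors stored at decoding positions $j > n$ in the first block are $\phi_j = \phi(h_j, 1)$, and we would compare — hmm. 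I suspect the actual fix is that the query also includes the $-1$ coordinate precisely so that input positions are preferred, and one shows that when $h_i \notin [1,n]$ the argmax set contains no input position, hence consists only of decoding positions, and then a separate monotonicity/tie-breaking argument (perhaps deferred to the ranking-term machinery of \Cref{sec:ranking-term}) ensures the retrieved $\bar\phi$ is $\phi(h_j,1)$ for the appropriate $j$, which equals $\phi(h_i,1) = \phi_i$ only if $h_i = h_j$ — and in that degenerate case we would still have $\bar\phi = \phi_i$, contradicting the lemma. So I would reexamine whether the lemma instead asserts a one-directional implication sufficient for the construction, or whether there is a guarantee (e.g., $h_i < 1$ only, never $h_i > n$, since the input tape head only moves right past the input onto blanks, making $h_i \geq 1$ always and the only failure mode $h_i > n$, where stored decoding values are head positions all $> n$... no, still can collide). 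I would resolve this by checking the precise instantiation and, if needed, strengthening the head (e.g., adding a coordinate that records $\mathbbm{1}[\text{position} \leq n]$ into $\phi_j$ itself so input and non-input hashes live in disjoint regions), but since the paper states the lemma as an iff, the intended proof presumably handles it cleanly via the $-1$ penalty making the global max strictly below what a true input match would give whenever no input match exists — and I would verify that decoding-position scores, being $\phi_i\cdot\phi_j - 1 \leq 0$, combined with at least one input position scoring something, force $\bar\phi\neq\phi_i$; the one truly dangerous sub-case (all scores $\leq 0$ and maximized only at decoding positions with $\phi_j = \phi_i$) I would rule out by noting $h_i \notin [1,n]$ forces $h_i \leq 0$ in the construction (the input tape head never exceeds $n$ before needing a blank, or rather it does, but...) — I would pin this down against \Cref{app:turing-machines}'s tape semantics before finalizing.
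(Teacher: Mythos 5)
Your forward direction matches the paper's argument and is correct: when $1 \leq h_i \leq n$, position $j = h_i$ scores $1$, every other input position scores strictly less than $1$ by \Cref{lem:lnorm-unique}, and every decoding position $j > n$ scores at most $\phi_i \cdot \phi_j - 1 \leq 0$, so saturated attention concentrates on $j = h_i$ and returns $\langle \phi_i, \sigma_{h_i} \rangle$.

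The genuine gap is that you never finish the backward (``only if'') direction: your proposal ends by circling a corner case and explicitly deferring its resolution, so as submitted that half is unproven. You are right that this corner case is the crux. If $h_i \leq -2$, every input position $1 \leq j \leq n$ scores $\frac{h_i j + 1}{\sqrt{(h_i^2+1)(j^2+1)}} < 0$, so an earlier decoding position $j > n$ with $h_j = h_i$ (score exactly $0$) would capture the argmax and force $\bar\phi = \phi_i$, which is precisely what must be excluded. The paper's own proof handles this silently: it simply asserts that the head output is $\bar\phi = \frac{1}{\abs{M}}\sum_{j \in M} \phi(j,1)$ for some $M \subseteq \{1,\ldots,n\}$, i.e., that the argmax set contains only input positions, and then concludes $\bar\phi \neq \phi_i$ via the monotonicity argument of \Cref{lem:unique-avg}, since $h_i$ lies strictly on one side of every $j \in \{1,\ldots,n\}$. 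That containment assertion is immediate whenever $h_i \geq 0$ (all input scores are then positive while all decoding scores are $\leq 0$), which is what the intended instantiation in \Cref{thm:lower-bound} relies on once the input-tape head position is kept nonnegative (e.g., by normalizing the Turing machine); the paper does not spell this out, and your instinct that something extra is needed here was sound --- you just needed to commit to such a guarantee (or strengthen the keys to separate input and decoding hashes) instead of stopping. Two smaller points: where your argument does conclude (argmax among input positions with no exact match), your bound $\phi_i \cdot \bar\phi \leq \alpha < 1$ is a valid and arguably simpler route than the paper's appeal to \Cref{lem:unique-avg}; and note the paper proves the contrapositive with that averaging lemma rather than your inner-product bound, so if you complete your version you should state clearly which positions can appear in the argmax set before averaging.
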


\begin{proof}
    We proceed by considering the two directions.

    \par \underline{Forward Direction.}
    The query-key inner product has two terms $\kappa^1_{ij} + \kappa^2_{ij}$.
    By \Cref{lem:lnorm-unique}, $\kappa^1_{ij}$ is maximized either when $h^0_i = j$ (and $1 \leq j \leq n$) or when $h_i = h_j$ (and $n < j$). However, if $n < j$, the second term $\kappa^2_{ij} = 1$. Thus, the attention score is maximized uniquely when $j = h_i$, so the head retrieves $\mathbf{\bar v} = \langle \phi(h_i, 1), \sigma_{h_i} \rangle$. Thus, $\bar \phi = \phi(h_1, 1) = \phi_i$.

    \par \underline{Backward Direction.}
    We establish bidirectionality by proving the contrapositive. Assume that either $h_i < 1$ or $h_i > n$.
    The head retrieves $\bar \phi = \frac{1}{\abs{M}} \sum_{j \in M} \phi(j, 1)$ for some $M \subseteq \{1, \ldots, n\}$.
    It holds that, for all $1 \leq j \leq n$, $h_i < j$, or the other way around \changeC{(i.e., for all $1 \leq j \leq n$, $h_i > j$)}.
    Thus, by \Cref{lem:unique-avg},
    $\bar \phi \neq \phi(h_i, 1) = \phi_i$.
\end{proof}

The following property also emerges from the proof of the forward direction in \Cref{lem:input-tape-iff}:

\begin{lemma} \label{lem:input-tape-sigma}
    Let $i > n$. Then, if $1 \leq h_i \leq n$, $\bar \sigma = \sigma_{h_i}$.
\end{lemma}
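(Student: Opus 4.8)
The claim is essentially a rider on the forward direction of \Cref{lem:input-tape-iff}, so the plan is simply to extract from that argument what it already establishes about the last (value) coordinate. Concretely, fix $i > n$ and assume $1 \leq h_i \leq n$. Since $i > n$ we have $\phi_i = \phi(h_i, 1)$ by definition, and since attention at position $i$ ranges over all earlier positions $j < i$ while $h_i \leq n < i$, the position $h_i$ is in scope. The query is $\langle \phi(h_i,1), -1\rangle$, and at $j = h_i$ the key is $\langle \phi(h_i,1), \mathbbm{1}[n < h_i]\rangle = \langle \phi(h_i,1), 0\rangle$, so the query--key inner product there is $\phi(h_i,1) \cdot \phi(h_i,1) + 0 = 1$.

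Next I would re-run the uniqueness argument from \Cref{lem:input-tape-iff}: for $j \leq n$ with $j \neq h_i$, \Cref{lem:lnorm-unique} gives $\phi(h_i,1) \cdot \phi(j,1) < 1$ while the last-coordinate contribution is $0$, so the score is $< 1$; for $j > n$ the last-coordinate contribution is $-1$ while the hash inner product is at most $1$, so the score is $\leq 0 < 1$. Hence the attention score is strictly maximized at the single position $j = h_i$. Because saturated (averaging-hard) attention puts all its weight on the set of score-maximizing positions, and that set is the singleton $\{h_i\}$, the head output equals the value stored at $h_i$, namely $\mathbf v_{h_i} = \langle \phi(h_i,1), \sigma_{h_i}\rangle$. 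Reading off its last component yields $\bar\sigma = \sigma_{h_i}$.

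The only substantive ingredient is the uniqueness of the argmax, and that is exactly what the forward direction of \Cref{lem:input-tape-iff} already proves (via \Cref{lem:lnorm-unique}); once uniqueness is in hand, the statement about $\bar\sigma$ is a one-line consequence of how saturated attention reads a value off a unique maximizer. So I do not anticipate a real obstacle here beyond carefully reusing the earlier analysis and noting that it also pins down the value coordinate, not just the hash coordinate $\bar\phi$.
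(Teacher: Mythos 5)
Your proposal is correct and matches the paper's reasoning: the paper proves this lemma simply by observing that the forward direction of Lemma~\ref{lem:input-tape-iff} already shows the attention score is uniquely maximized at $j = h_i$, so the saturated head retrieves $\mathbf v_{h_i} = \langle \phi(h_i,1), \sigma_{h_i}\rangle$ and hence $\bar\sigma = \sigma_{h_i}$. You merely spell out that argmax argument explicitly (and, if anything, state the $j > n$ penalty term more cleanly than the paper does), so there is no substantive difference in approach.
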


The backward direction in \Cref{lem:input-tape-iff} relies on the following lemma:

\changeB{\begin{lemma} \label{lem:unique-avg}
    Let $q \in \mathbb Z$ and $k_j \in \mathbb Z$ for $1 \leq j \leq m$.
    Let $\succ \in \{<, >\}$.
    If, for all $j$, $q \succ k_j$, then
    \begin{equation*}
        \phi_q \neq \frac{1}{m} \sum_{j=1}^m \phi_{k_j} .
    \end{equation*}
\end{lemma}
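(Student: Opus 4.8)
The plan is to argue by contradiction, leaning on just two facts already in hand: layer-norm hash images are unit vectors, and the strict equality characterization of \Cref{lem:lnorm-unique}. Suppose, contrary to the claim, that $\phi_q = \frac{1}{m}\sum_{j=1}^m \phi_{k_j}$. Since $\phi_q$ is a unit vector (as noted after the definition of the layer-norm hash, and visible from the closed form $\phi_x = \frac{1}{\sqrt{2x^2+2}}\langle x,1,-x,-1\rangle$ established in the proof of \Cref{lem:lnorm-canonical}), I would take the inner product of both sides with $\phi_q$ to get
\[
1 \;=\; \phi_q \cdot \phi_q \;=\; \frac{1}{m}\sum_{j=1}^m \phi_q \cdot \phi_{k_j}.
\]

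Next I would bound the right-hand side termwise. Each $\phi_{k_j}$ is also a unit vector, so $\phi_q\cdot\phi_{k_j}\le 1$ (the inner product of unit vectors is maximized at $1$, as used in the proof of \Cref{lem:lnorm-unique}). An average of $m$ numbers, each at most $1$, equals $1$ only if every one of them equals $1$; hence $\phi_q\cdot\phi_{k_j}=1$ for all $j$. By \Cref{lem:lnorm-unique} this forces $q=k_j$ for every $j$. But the hypothesis $q\succ k_j$ with $\succ\in\{<,>\}$ a strict order gives $q\neq k_j$, a contradiction. Therefore $\phi_q \neq \frac{1}{m}\sum_{j=1}^m\phi_{k_j}$, which is the claim.

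I do not anticipate any real obstacle: the argument uses only the unit-norm property and the strict case of \Cref{lem:lnorm-unique}, both already available, and in fact it only needs $q\neq k_j$ rather than the full one-sidedness hypothesis. If a reviewer preferred an argument that exhibits a distinguishing coordinate rather than invoking the inner product, one can instead note that, up to an orthonormal change of basis, $\phi_x$ is the planar unit vector $\frac{1}{\sqrt{x^2+1}}(x,1)$, whose first coordinate $x/\sqrt{x^2+1}$ is strictly increasing in $x$; then all $k_j$ lying strictly on one side of $q$ makes the first coordinate of $\frac{1}{m}\sum_j\phi_{k_j}$ strictly smaller (or strictly larger) than that of $\phi_q$, so the two vectors differ. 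Either route works; I would present the inner-product version since it is shorter and reuses \Cref{lem:lnorm-unique} directly.
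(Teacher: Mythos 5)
Your proof is correct, but it takes a different route from the paper's. The paper argues directly via monotonicity: it looks at the first coordinate $z_x = x/\sqrt{2x^2+2}$ of $\phi_x$, notes it is strictly increasing in $x$, and concludes that since $z_q \succ z_{k_j}$ for all $j$, the average of the $z_{k_j}$ lies strictly on one side of $z_q$, so the vectors differ --- exactly the ``distinguishing coordinate'' argument you sketch as your fallback. Your primary argument instead proceeds by contradiction through the inner product: dotting the supposed identity with the unit vector $\phi_q$, bounding each $\phi_q \cdot \phi_{k_j}$ by $1$, and invoking the equality case of \Cref{lem:lnorm-unique} to force $q = k_j$, contradicting strictness. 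Both are sound. Your version is slightly stronger and more modular: it only uses $q \neq k_j$ (not the one-sidedness hypothesis), it reuses \Cref{lem:lnorm-unique} rather than re-deriving a monotonicity fact, and it is really the general observation that an average of unit vectors has norm $1$ only when they all coincide. The paper's version is what the lemma's one-sided hypothesis is tailored to and is a two-line scalar argument with no appeal to the equality case; it also makes transparent why the hypothesis is stated with $\succ \in \{<,>\}$, since that is precisely what the downstream uses (the backward directions of \Cref{lem:input-tape-iff} and \Cref{lem:ranked-exact-match-iff}) supply. Either proof would serve; if you kept yours, you could even restate the lemma with the weaker hypothesis $q \neq k_j$ for all $j$.
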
}

\changeB{\begin{proof}
    Recall that $\phi_x = \phi(x, 1) \in \mathbb R^4$ has first element $x / \sqrt{2x^2 + 2}$, which we will denote as $z_x$.
    Observe that $z_x$ is a monotonically increasing function of $x \in \mathbb R$.
    Thus, $z_q \succ z_{k_j}$ for $1 \leq j \leq m$, which implies $z_q \succ \frac{1}{m} \sum_{j=1}^m z_{k_j}$,
    from which the lemma conclusion follows.
\end{proof}}    

\section{Rightmost Retrieval via the Layer-Norm Hash} \label{sec:ranking-term}

We now describe an attention head that can attend to the rightmost token satisfying a certain property, capturing the construction in \Cref{thm:lower-bound} to retrieve the most recent write to a Turing machine work tape.
Define a sequence $h_1, \ldots, h_{i-1}$, which represents Turing machine tape position in \Cref{thm:lower-bound}.
As is natural for Turing machine tapes, we assume that if $h \neq h_i$ for all $i$, then it must be that $h \prec h_i$ for all $i$, where $\prec$ is fixed as either $>$ or $<$.

Let $f(i)$ be a tie-breaking term that we will define later in \Cref{sec:tie-breaking}.
We define two layer-norm hash quantities:
\begin{align*}
    \phi_i &\triangleq \phi(h_1 / i, 1/i) \\
    \psi_i &\triangleq \phi(f(i), 1) .
\end{align*}
Recall that $e_1 = \langle 1, 0, 0, 0 \rangle$.
Construct an attention head as follows:
\begin{compactitem}
    \item Query: $\langle \phi_i, e_1 \rangle$
    \item Key: $\langle \phi_j, -\psi_j \rangle$
    \item Value: $\mathbf v_j \triangleq \langle \phi_j, \delta_j \rangle$
\end{compactitem}
Let $\mathbf{\bar v} \triangleq \langle \bar \phi, \bar \delta \rangle$ be the head output.
The following properties hold for such a head:

\begin{lemma} \label{lem:ranked-exact-match-iff}
    There exists $j < i$ such that $h_i = h_j$ if and only if $\bar \phi = \phi_i$.
\end{lemma}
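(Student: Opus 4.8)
The plan is to expand the query--key inner product of the constructed head, identify its argmax set $M$ under saturated (averaging hard) attention, and show that when $h_i$ matches some $h_j$ with $j < i$ the set $M$ collapses to the single rightmost such index, and otherwise $M$ lies entirely on one side of $h_i$ so that the averaged first coordinate cannot equal $[\phi_i]_1$. Concretely, I would first compute $\kappa_{ij} = \langle \phi_i, e_1\rangle \cdot \langle \phi_j, -\psi_j\rangle = \phi_i\cdot\phi_j - [\psi_j]_1 = \phi_i\cdot\phi_j - z(f(j))$, where $z(x) \triangleq x/\sqrt{2x^2+2}$ is the first coordinate of $\phi(x,1)$. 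Two facts then do the work: by \Cref{lem:lnorm-unique}, $\phi_i\cdot\phi_j \le 1$ with equality exactly when $h_i = h_j$ and strictly less otherwise; and since $z$ is strictly increasing while $f$ is monotonically decreasing, $z(f(j))$ is strictly decreasing in $j$. Because the Turing machine of \Cref{thm:lower-bound} runs for at most polynomially many steps, every reachable tape position is polynomially bounded, so the mismatch gap $g \triangleq \min\{\,1 - \phi(h,1)\cdot\phi(h',1) : h \ne h' \text{ reachable}\,\}$ is positive and of order $1/\poly(n)$; the tie-breaking term $f$ is built in \Cref{def:tie-breaker-term} precisely so that its total spread $z(f(1)) - z(f(i-1))$ over the run stays below $g$ while remaining resolvable at log precision.

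For the forward direction, let $j^\star$ be the largest $j < i$ with $h_j = h_i$. Its score is $1 - z(f(j^\star))$, and by strict monotonicity of $z\circ f$ this strictly exceeds the score $1 - z(f(j))$ of every other matching index. A non-matching index $j$ has $\phi_i\cdot\phi_j \le 1 - g$ by \Cref{lem:lnorm-unique}, hence score at most $(1-g) - z(f(i-1))$, using $z(f(j)) \ge z(f(i-1))$ for $j \le i-1$. Since $z(f(j^\star)) - z(f(i-1)) \le z(f(1)) - z(f(i-1)) < g$, we get $1 - z(f(j^\star)) > 1 - g - z(f(i-1))$, so no non-matching index lies in $M$. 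Therefore $M = \{j^\star\}$ and $\bar\phi = \phi_{j^\star} = \phi(h_i,1) = \phi_i$; this argument simultaneously gives \Cref{lem:ranked-rightmost}.

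For the converse I would prove the contrapositive: suppose $h_i \ne h_j$ for every $j < i$. By contiguity of the visited tape cells, all $h_j$ with $j < i$ then lie strictly on one side of $h_i$, the side determined by the fixed relation $\prec$. The head output is $\bar\phi = \frac{1}{|M|}\sum_{j\in M}\phi(h_j,1)$ for some nonempty $M \subseteq \{1,\dots,i-1\}$, so its first coordinate is $\frac{1}{|M|}\sum_{j\in M} z(h_j)$; since $z$ is strictly monotone and all these $h_j$ are on one side of $h_i$, \Cref{lem:unique-avg} (with $q = h_i$) gives $[\bar\phi]_1 \ne z(h_i) = [\phi_i]_1$, hence $\bar\phi \ne \phi_i$.

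The step I expect to be the main obstacle is the quantitative balance in the forward direction: a very recent but non-matching write at an index $j > j^\star$ carries a large tie-breaker bonus $-z(f(j))$, so one must guarantee that the genuine $\phi$-mismatch penalty $g$ always outweighs the tie-breaker spread, robustly over all reachable configurations and within log precision. Securing this inequality is exactly what forces the careful construction of $f$ in \Cref{def:tie-breaker-term} and what makes the polynomial step bound of \Cref{thm:lower-bound} essential (it keeps $g \ge 1/\poly(n)$). The remaining ingredients --- the identity for $\kappa_{ij}$, the monotonicity bookkeeping, and the converse direction --- are routine given \Cref{lem:lnorm-canonical,lem:lnorm-unique,lem:unique-avg}.
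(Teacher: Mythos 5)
Your overall route is the paper's: the same expansion of the query--key product into $\phi_i\cdot\phi_j$ minus the first coordinate of $\psi_j$, the same ``rightmost match wins by the monotone tie-breaker'' argument (which also yields \Cref{lem:ranked-rightmost}), and a backward direction by contrapositive via one-sidedness and \Cref{lem:unique-avg} that is essentially identical to the paper's. The genuine gap is the decisive inequality in your forward direction. You bound the tie-breaker's effect by its \emph{global} spread $z(f(1)) - z(f(i-1))$ and assert this is below the \emph{global} minimum mismatch gap $g$. That inequality is false: by \Cref{def:tie-breaker-term}, $f(1) = 1/1000 - \eps$ is a fixed constant, so $z(f(1)) \approx 7\cdot 10^{-4}$ does not shrink with $n$, whereas $g$ is polynomially small --- for adjacent reachable positions of magnitude $\Theta(n + t(n))$ the inner product is $1 - \Theta\bigl(1/(n+t(n))^4\bigr)$ by the computation in \Cref{lem:inexact-error} --- so $g \ll z(f(1))$ already for modest input lengths. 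Under your bookkeeping, an old matching index $j^\star$ (forfeiting a tie-breaker bonus $z(f(j^\star))$ that can be close to $z(f(1))$) could lose to a very recent non-matching index, so the argmax claim does not follow; and the polynomial step bound, which you invoke only to keep $g \geq 1/\poly(n)$, cannot repair this, because the spread does not shrink at all.

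What closes the argument is a comparison at matching scales rather than between global extremes, which is how the paper's \Cref{lem:inexact-error,lem:tie-breaker-is-small} are meant to be used. The structural fact your global bound discards is the coupling between time and position: since the head moves at most one cell per step from the origin, $\abs{h_i} = \abs{h_{j^\star}} \leq j^\star$, so the mismatch gap that actually competes with the forfeited bonus $z(f(j^\star))$ is the gap at the scale of $\abs{h_i}$, roughly $1/\bigl(2(\abs{h_i}+1)^4\bigr) \geq 1/\bigl(2(j^\star+1)^4\bigr)$, not the gap at the scale of the largest position ever visited. The tie-breaker in \Cref{def:tie-breaker-term} decays like $\Theta(1/j^{\star 4})$ with small constants (the $1/100$ factor and the $i \leq 4$ branch exist precisely for this), and $z(x) < x$, so $z(f(j^\star))$ stays below that local gap. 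The paper's proof performs exactly this kind of per-index comparison, pairing the $1 - 1/(2i^4)$ bound on mismatched scores from \Cref{lem:inexact-error} with the $1/(2i^4)$ bound on the tie-breaker from \Cref{lem:tie-breaker-is-small}, instead of comparing a constant-size spread against a vanishing global gap. If you replace your spread-versus-$g$ step with this local comparison, the rest of your plan goes through as written.
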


\begin{proof}
    We proceed by considering the two directions.

    \par \underline{Forward Direction.}
    The query-key inner product has two terms $\kappa^1_{ij} + \kappa^2_{ij}$.
    By \Cref{lem:lnorm-unique}, the first term $\kappa^1_{ij}$ is maximized at 1 for each $j$ such that $h_i = h_j$. For $h_i \neq h_j$, $\kappa^1_{ij} < 1 - 1/(2i^4)$ by \Cref{lem:inexact-error}. The second component $\kappa^2_{ij}$ monotonically increases with $j$ and satisfies $\kappa^2_{ij} < f(i) < 1/(2i^4)$ by \Cref{lem:tie-breaker-is-small}.
    Thus, the attention score is maximized for the largest $j < i$ such that $h^\tau_i = h^\tau_j$. Thus, $\mathbf{\bar v} = \mathbf v_j$ and $\bar \phi = \phi_j$ for this $j$, which means $\bar \phi = \phi_i$.

    \par \underline{Backward Direction.}
    We establish bidirectionality by proving the contrapositive. Assume there is no $j < i$ such that $h_i = h_j$.
    Then $\bar \phi = \frac{1}{\abs{M}} \sum_{j \in M} \phi(h_j, 1)$ for some $M \subseteq \{1, \ldots, n\}$.
    By assumption (top of \Cref{sec:ranking-term}), we have $h_j \prec h_i$ for all $j < i$.
    It follows from \Cref{lem:unique-avg} that $\bar \phi \neq \phi(h_i, 1) = \phi_i$, completing the proof.
\end{proof}

The following property also emerges from the forward direction of the proof above:

\begin{lemma} \label{lem:ranked-rightmost}
    If there exists $j < i$ such that $h_i = h_j$, then $\bar \delta = \delta_j$ for the greatest such $j$.
\end{lemma}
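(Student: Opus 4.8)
The plan is to read off \Cref{lem:ranked-rightmost} directly from the argmax analysis already carried out in the forward direction of the proof of \Cref{lem:ranked-exact-match-iff}: that argument does not merely decide \emph{whether} an exact match is retrieved, it pins down exactly \emph{which} index is retrieved, and the claim about $\bar\delta$ is then immediate. Recall the decomposition of the query--key inner product as $\kappa^1_{ij} + \kappa^2_{ij}$, where $\kappa^1_{ij} = \phi_i \cdot \phi_j$ equals $1$ precisely when $h_i = h_j$ (by \Cref{lem:lnorm-unique}) and otherwise satisfies $1 - \kappa^1_{ij} > 1/(2i^4)$ (by \Cref{lem:inexact-error}), while $\kappa^2_{ij}$ is a tie-breaking term that is strictly increasing in $j$ and bounded above by $f(i) < 1/(2i^4)$ (by \Cref{lem:tie-breaker-is-small}).

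First I would recall that, under the hypothesis that some $j < i$ has $h_i = h_j$, these two bounds force the total score to be maximized at a \emph{single} index: among the exact-match positions the strictly monotone term $\kappa^2_{ij}$ selects the greatest such $j$, and that position dominates every non-match position because a first-term deficit exceeding $1/(2i^4)$ cannot be offset by a second-term gain below $1/(2i^4)$. Since the head uses strict causal saturated attention, all attention weight is placed on this unique argmax index $j$, so the head output is exactly the value vector $\mathbf v_j = \langle \phi_j, \delta_j \rangle$; reading off its second block gives $\bar\delta = \delta_j$ for the greatest $j < i$ with $h_i = h_j$, which is the claim.

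There is essentially no real obstacle here beyond bookkeeping, since the quantitative estimates were already established in proving \Cref{lem:ranked-exact-match-iff} and this lemma simply records their consequence for the $\delta$-component of the retrieved value. The one point I would be careful to state explicitly is the \emph{uniqueness} of the argmax --- guaranteed by strict monotonicity of $\kappa^2_{ij}$ in $j$ --- because without it saturated attention could average several value vectors and blur the retrieved $\delta$; with uniqueness in hand, the output is a single value vector and the conclusion follows at once.
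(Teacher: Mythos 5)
Your proposal is correct and matches the paper's own treatment: the paper proves \Cref{lem:ranked-rightmost} by observing that it ``emerges from the forward direction'' of the proof of \Cref{lem:ranked-exact-match-iff}, i.e., exactly the argmax analysis you reproduce (exact matches score $\kappa^1_{ij}=1$ by \Cref{lem:lnorm-unique}, non-matches lose more than $1/(2i^4)$ by \Cref{lem:inexact-error}, and the tie-breaker, monotone in $j$ and smaller than $1/(2i^4)$ by \Cref{lem:tie-breaker-is-small}, selects the greatest matching $j$, so saturated attention returns $\mathbf v_j$ and hence $\bar\delta=\delta_j$). Your explicit remark about uniqueness of the argmax is a welcome clarification but not a departure from the paper's route.
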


\subsection{Tie-Breaking Term} \label{sec:tie-breaking}

\changeC{The construction above uses a tie-breaker that favors retrieving tokens further to the right. We will justify the construction of such a tie-breaking term here. To begin, we will establish a bound on the layer-norm hash inner product similarity for inexact matches.}

\begin{lemma} \label{lem:inexact-error}
    For any $i \geq 2$, $\phi(i, 1) \cdot \phi(i-1, 1) \leq 1 - 1/(2 i^4)$.
\end{lemma}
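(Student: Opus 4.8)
The plan is to reduce the claim to the closed form for the layer-norm hash inner product that was derived inside the proof of \Cref{lem:lnorm-unique}, namely
\[
    \phi(q, 1) \cdot \phi(k, 1) = \frac{qk + 1}{\sqrt{(q^2 + 1)(k^2 + 1)}},
\]
and then carry out an elementary estimate. First I would substitute $q = i$ and $k = i - 1$. The numerator becomes $i(i-1) + 1 = i^2 - i + 1$; abbreviate $a \triangleq i^2 - i + 1$. A one-line expansion shows that the denominator squared satisfies $(i^2 + 1)\big((i-1)^2 + 1\big) = a^2 + 1$ exactly. This identity is the crux of the argument: it collapses the similarity to the clean expression $\phi(i,1) \cdot \phi(i-1,1) = a / \sqrt{a^2 + 1}$, whose square is $1 - 1/(a^2+1)$.

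The remaining work is to bound $a^2 + 1$ from above. For $i \geq 2$ we have $a = i^2 - i + 1 \leq i^2 - 1$, hence $a^2 + 1 \leq (i^2 - 1)^2 + 1 = i^4 - 2i^2 + 2 \leq i^4$. Therefore
\[
    \big(\phi(i,1)\cdot\phi(i-1,1)\big)^2 = 1 - \frac{1}{a^2+1} \leq 1 - \frac{1}{i^4} \leq 1 - \frac{1}{i^4} + \frac{1}{4i^8} = \Big(1 - \frac{1}{2i^4}\Big)^{\!2}.
\]
Since $\phi(i,1)\cdot\phi(i-1,1) > 0$ and $1 - 1/(2i^4) > 0$ for $i \geq 2$, taking square roots of the outer inequality yields the desired bound $\phi(i,1)\cdot\phi(i-1,1) \leq 1 - 1/(2i^4)$.

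I do not anticipate any serious obstacle. The only step that needs care is the algebraic identity $(i^2 + 1)\big((i-1)^2 + 1\big) = (i^2 - i + 1)^2 + 1$; once it is in hand, the crude estimate $a^2 + 1 \le i^4$ is enough, so no tight constants are required. Working with squares throughout (rather than invoking a scalar inequality for $1/\sqrt{1+x}$) keeps the computation routine and the argument self-contained.
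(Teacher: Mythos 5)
Your proof is correct and takes essentially the same route as the paper's: both reduce the claim to the exact identity $\big(\phi(i,1)\cdot\phi(i-1,1)\big)^2 = 1 - \frac{1}{(i(i-1)+1)^2+1}$, bound the denominator by $i^4$ for $i \geq 2$, and use the fact $(1-y/2)^2 \geq 1-y$. The only difference is presentational: you stay with squared quantities and compare against $\big(1-\tfrac{1}{2i^4}\big)^2$ directly, whereas the paper takes the square root first via $\sqrt{1-y} \leq 1-y/2$.
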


\begin{proof}
    Consider the squared dot product:
    \begin{align*}
        \big( \phi(i, 1) \cdot \phi(i-1, 1) \big)^2
        & = \frac{\big( \langle i, 1, -i, -1 \rangle \cdot \langle i-1, 1, -(i-1), -1 \rangle \big)^2}{(2i^2 + 2) (2(i-1)^2 + 2)} \\
        & = \frac{ \big( i(i-1) + 1 \big)^2 }{(i^2 + 1) ((i-1)^2 + 1)}  \\
        & = \frac{i^2(i-1)^2 + 2i(i-1) + 1}{i^2(i-1)^2 + i^2 + (i-1)^2 + 1} \\
        & = \frac{i^2(i-1)^2 + 2i(i-1) + 1}{i^2(i-1)^2 + 2i(i-1) + 2} \\
        & = 1 - \frac{1}{i^2(i-1)^2 + 2i(i-1) + 2} \\
        & = 1 - \frac{1}{i^4 - 2i^3 + 3i^2 - 2i + 2}
    \end{align*}

    Since $(1 - y/2)^2 \geq 1-y$ for any $y$, we have $\sqrt{1-y} \leq 1 - y/2$ for any $y \leq 1$. Applying this to the right hand side of the above equation, we obtain:
    \begin{align*}
        \phi(i, 1) \cdot \phi(i-1, 1)
        & = \sqrt{1 - \frac{1}{i^4 - 2i^3 + 3i^2 - 2i + 2}} \\
        & \leq 1 - \frac{1}{2i^4 - 4i^3 + 6i^2 - 4i + 4} \\
        & \leq 1 - \frac{1}{2i^4} \ \ \ \text{for}\; i \geq 2 \text{,}
    \end{align*}
    which completes the proof.
\end{proof}

To break ties in attention, we aim to construct a function of $i$ that is computable in the transformer, monotonically decreasing with $i$, and smaller than $1/(2i^4)$. The following definition will accomplish this:

\begin{definition} 
    We define the following inductively:
    \begin{align*}
            f(i, 0) & = 1/i \\
            f(i, k+1) & = f(i-1, k) - f(i, k) .
    \end{align*}
\end{definition}

By construction, $f(i, k)$ is monotonically increasing and a linear combination of $1/i, 1/(i-1), \ldots, 1/(i-k)$. The latter property means it is computable by a single multihead self-attention layer. To do this, we construct $k$ heads in parallel, where head $h$ attends to all tokens besides the first $h$ and puts value $1$ at token $h + 1$ and $0$ elsewhere.\footnote{This head can be implemented by setting a flag in the previous layer at each $i$ for whether $i \leq h$ by hardcoding a comparison between $\hash(1, 1/i)$ and $\hash(h, 1)$.} Head $h$ thus computes $1/(i-h)$. We use the linear transformation at the end of the layer to compute $f(i, k)$ via a linear combination of the head outputs.

\begin{lemma}
    \label{lem:tie-breaker-form}
    For any $k$ and $i > k$, we have
    \begin{equation*}
        f(i, k) = \frac{k!}{\prod_{j=0}^k (i - j)} .
    \end{equation*}
\end{lemma}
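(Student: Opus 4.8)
The statement is an exact closed form for the inductively defined quantity $f(i,k)$, so the natural approach is induction on $k$ (with $i > k$ as a standing hypothesis so that all the denominators are nonzero). First I would verify the base case $k = 0$: by definition $f(i,0) = 1/i$, and $\frac{0!}{\prod_{j=0}^0 (i-j)} = \frac{1}{i}$, so the two agree.

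For the inductive step, assume $f(i,k) = \frac{k!}{\prod_{j=0}^k (i-j)}$ for every $i > k$, and fix $i > k+1$. Then both $f(i-1,k)$ and $f(i,k)$ are covered by the hypothesis (note $i - 1 > k$), so $f(i,k+1) = f(i-1,k) - f(i,k) = \frac{k!}{\prod_{j=0}^k (i-1-j)} - \frac{k!}{\prod_{j=0}^k (i-j)}$. The key observation is that the two products share the factor $(i-1)(i-2)\cdots(i-k)$: the first product equals this common factor times the extra term $(i-k-1)$, and the second equals it times the extra term $i$. Factoring out the common part gives
\begin{equation*}
    f(i,k+1) = \frac{k!}{(i-1)(i-2)\cdots(i-k)} \left( \frac{1}{i-k-1} - \frac{1}{i} \right) = \frac{k!}{(i-1)\cdots(i-k)} \cdot \frac{k+1}{i\,(i-k-1)} .
\end{equation*}
Recombining the denominators and using $(k+1)\cdot k! = (k+1)!$ yields $f(i,k+1) = \frac{(k+1)!}{\prod_{j=0}^{k+1}(i-j)}$, completing the induction.

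There is no real obstacle here; the only point requiring a moment of care is bookkeeping the ranges of the two products so that the common factor is identified correctly, and checking that the hypothesis $i > k+1$ is exactly what is needed to invoke the inductive hypothesis at argument $i-1$. This lemma will presumably then be combined with the observation that $\prod_{j=0}^k (i-j) < i^{k+1}$ (for $i > k$) to conclude, in a subsequent \texttt{\textbackslash ref\{lem:tie-breaker-is-small\}}-type statement, that $f(i) = f(i,k)$ for a suitable fixed $k$ is both monotonically decreasing in $i$ and bounded above by $1/(2i^4)$, as required by \Cref{lem:ranked-exact-match-iff}.
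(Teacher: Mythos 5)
Your induction on $k$ is correct and is essentially the paper's own proof; the only cosmetic difference is that you factor out the common product $(i-1)\cdots(i-k)$ and subtract $\tfrac{1}{i-k-1}-\tfrac{1}{i}$, whereas the paper forms one common denominator over the full products and simplifies, and both computations land on $f(i,k+1)=\frac{(k+1)!}{\prod_{j=0}^{k+1}(i-j)}$. (A small aside unrelated to this lemma: to later bound $f(i,3)/100$ \emph{above} by $1/(2i^4)$ one needs a \emph{lower} bound on $\prod_{j=0}^{k}(i-j)$, not the upper bound $\prod_{j=0}^k(i-j) < i^{k+1}$ you mention.)
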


\begin{proof}
    By induction over $k$.

    \underline{Base Case: $i = 0$.} We have $f(i, 0) = 0! / i$.

    \underline{Inductive Case.} We analyze the form of $f(i, k + 1)$:
    \begin{align*}
        f(i, k + 1)
        &= f(i - 1, k) - f(i, k) \\
        &= \frac{k!}{\prod_{j=0}^k (i-1-j)} - \frac{k!}{\prod_{j=0}^k (i-j)} \tag{Inductive assumption}\\
        &= k! \cdot \frac{\prod_{j=0}^k (i-j) - \prod_{j=0}^k (i-1-j)}{i \prod_{j=1}^k (i-j)^2 (i-k-1)} \tag{Form common denominator}\\
        &= k! \cdot \frac{i \prod_{j=1}^k (i-j) - (i - k - 1) \prod_{j=1}^k (i-j)}{i \prod_{j=1}^k (i-j)^2 (i-k-1)} \tag{Pull out factors}\\
        &= k! \cdot \frac{(k + 1) \prod_{j=1}^k (i-j)}{i \prod_{j=1}^k (i-j)^2 (i-k-1)} \tag{Distributive property}\\
        &= \frac{(k + 1)!}{i \prod_{j=1}^k (i-j) (i-k-1)} \tag{Simplify}\\
        &= \frac{(k + 1)!}{\prod_{j=0}^{k + 1} (i-j)} . \qedhere
    \end{align*}
\end{proof}

It remains to be shown that $f(i, k)$ can be made smaller than $1/(2i^4)$. To handle edge cases around small values of $i$, we define:
\begin{definition}
    \label{def:tie-breaker-term}
    Let $\eps = 10^{-10}$.
    For $i \geq 1$, let
    \[
        f(i) =
            \begin{cases}
                1/1000 - \eps i & \text{if\ } \;i \leq 4 \\
                f(i, 3) / 100 & \text{if\ } i \geq 5
            \end{cases}
    \]
\end{definition}

\begin{lemma}
    \label{lem:tie-breaker-is-small}
    For $i \geq 1$, $f(i) < 1/(2i^4)$ .
\end{lemma}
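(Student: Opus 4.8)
The plan is to treat the two branches of \Cref{def:tie-breaker-term} separately: the small-$i$ branch by a crude numerical comparison, and the large-$i$ branch by reducing to an elementary polynomial inequality via the closed form in \Cref{lem:tie-breaker-form}.

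For $1 \le i \le 4$, I would simply note that $f(i) = 1/1000 - \eps i < 1/1000$ (since $\eps i > 0$), while $1/(2i^4)$ is minimized over $i \in \{1,2,3,4\}$ at $i = 4$, giving $1/512 > 1/1000$; hence $f(i) < 1/(2i^4)$. This small-$i$ check is the only even mildly delicate point, and it is precisely why the definition peels these values off with the affine branch: the closed-form branch $f(i,3)/100$ already equals $1/400$ at $i = 4$ (since $f(4,3) = 1/4$ by \Cref{lem:tie-breaker-form}), which exceeds $1/512$, so the desired bound would fail there.

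For $i \ge 5$, I would substitute $f(i,3) = 6/\big(i(i-1)(i-2)(i-3)\big)$ from \Cref{lem:tie-breaker-form} (with $k = 3$), so that $f(i) = 6/\big(100\, i(i-1)(i-2)(i-3)\big)$. Since every factor is positive, $f(i) < 1/(2i^4)$ is equivalent, after clearing denominators and cancelling $4i$, to $3 i^3 < 25(i-1)(i-2)(i-3)$. I would prove this by bounding each factor crudely: for $i \ge 5$ and $k \in \{1,2,3\}$, $(i-k)/i = 1 - k/i \ge 1 - k/5$, so $(i-1)(i-2)(i-3) \ge \tfrac{4}{5}\cdot\tfrac{3}{5}\cdot\tfrac{2}{5}\, i^3 = \tfrac{24}{125}\, i^3$, whence $25(i-1)(i-2)(i-3) \ge \tfrac{24}{5}\, i^3 = 4.8\, i^3 > 3 i^3$. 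Combining the two cases completes the proof.

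Overall the argument is routine once the closed form of \Cref{lem:tie-breaker-form} is available; the asymptotic estimate for large $i$ is immediate, and the only thing to be careful about is that the hard-coded constants in \Cref{def:tie-breaker-term} actually leave enough slack for the finitely many small values of $i$, which the explicit $i = 4$ comparison above confirms.
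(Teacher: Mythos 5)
Your proof is correct and follows essentially the same route as the paper: split on the two branches of \Cref{def:tie-breaker-term}, bound the $i\le 4$ branch by $1/1000 < 1/512 \le 1/(2i^4)$, and for $i\ge 5$ use the closed form from \Cref{lem:tie-breaker-form} to reduce to a polynomial inequality. The only difference is that you explicitly verify that inequality (via $(i-1)(i-2)(i-3)\ge \tfrac{24}{125}i^3$ for $i\ge 5$), whereas the paper leaves this step as ``can be verified''; your version is, if anything, more self-contained.
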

\begin{proof}
    When $i \leq 4$, we have $f(i) < 1/1000 < 1/(2i^4)$.
    
    When $i \geq 5$, by \Cref{lem:tie-breaker-form}, we have:
    \begin{align*}
        \frac{f(i, 3)}{100} & = \frac{3!}{100 i (i-1) (i-2) (i-3)}
    \end{align*}
    It can be verified that the values of $i$ for which this expression equals $1/(2i^4)$ are all in the interval $[0, 5)$, and that for $i \geq 5$, $(100/6)i (i-1) (i-2) (i-3) > 2i^4$.
\end{proof}

\end{document}